\numberwithin{equation}{section}
\newtheorem*{theorem*}{Theorem}
\DeclarePairedDelimiter{\ceil}{\lceil}{\rceil}
\DeclarePairedDelimiter{\floor}{\lfloor}{\rfloor}
\newcommand{\bigo}{\mathcal{O}}
\newcommand{\EX}{\mathbb{E}}
\newcommand\1{\mathbf{1}}
\DeclareMathOperator*{\argmin}{argmin}
\title{What You See May Not Be What You Get: \\
UCB Bandit Algorithms Robust to $\varepsilon$-Contamination}
\author{Laura Niss, Ambuj Tewari \\ University of Michigan}  
\date{}
\begin{document}
\twocolumn
\maketitle

\begin{abstract}
Motivated by applications of bandit algorithms in education, we consider a stochastic multi-armed bandit problem with $\varepsilon$-contaminated rewards.
We allow an adversary to give arbitrary unbounded contaminated rewards with full knowledge of the past and future.
We impose the constraint that for each time $t$ the proportion of contaminated rewards for any action is less than or equal to $\varepsilon$.
We derive concentration inequalities for two robust mean estimators for sub-Gaussian distributions in the $\varepsilon$-contamination context.
We define the $\varepsilon$-contaminated stochastic bandit problem and use our robust mean estimators to give two variants of a robust Upper Confidence Bound (UCB) algorithm, crUCB. Using regret derived from only the underlying stochastic rewards, both variants of crUCB achieve $\bigo (\sqrt{KT\log T})$ regret for small enough contamination proportions.
Our simulations assume small horizons, reflecting the newly explored setting of bandits in education. We show that in certain adversarial regimes crUCB not only outperforms algorithms designed for stochastic (UCB1) and adversarial (EXP3) bandits but also those that have ``best of both worlds" guarantees (EXP3++ and TsallisInf) even when our constraint on the proportion of contaminated rewards is broken.

\end{abstract}


\section{INTRODUCTION}
We first review the problem of stochastic multi-armed bandits (sMAB) with contaminated rewards, or contaminated stochastic bandits (CSB). This scenario assumes that rewards associated with an action are sampled i.i.d. from a fixed distribution and that the learner observes the reward after an adversary has the opportunity to contaminate it. The observed reward can be unrelated to the reward distribution and can be maliciously chosen to fool the learner. An outline for this setup is presented in \Cref{sec:Problem Setting}.

We are primarily motivated by the use of bandit algorithms in education, where the rewards often come directly from human opinion. Whether responses come from undergraduate students, a community sample, or paid participants on platforms like MTurk, there is always reason to believe some responses are careless or inattentive to the question or could be assisted by bots \citep{neckaMeasuringPrevalenceProblematic2016, curranMethodsDetectionCarelessly2016}.

An example in education is a recent paper testing bandit Thompson sampling to identify high quality student generated solution explanations to math problems using MTurk participants \citep{williamsAXISGeneratingExplanations2016}. Using a rating between 1-10 from 150 participants, the results showed that Thompson sampling identified participant generated explanations that when viewed by other participants significantly improved their chance of solving future problems compared to no explanation or ``bad" explanations identified by the algorithm. While the proportion of contaminated responses will always depend on the population, recent work suggests even when screening out fraudulent participants, between $2-30\%$ of MTurk participants give low-quality samples \citep{ahlerMicroTaskMarketLemons2018, ryanDataContaminationMTurk2018, neckaMeasuringPrevalenceProblematic2016}. This is consistent with measurements of careless and inattentive responses seen in survey data, which reports $1-30\%$ with an estimated mode of $8-12\%$, with the conclusion that these responses are generally not a random sample \citep{curranMethodsDetectionCarelessly2016}. Accounting for these low quality responses is especially relevant in educational setting where the number of iterations an algorithm can run is often significantly smaller than those used by big tech (e.g. advertising).

Recent work in CSB has various assumptions on the adversary, the contamination, and the reward distributions. Many papers require the rewards and contamination to be bounded \citep{kapoorCorruptiontolerantBanditLearning2018, gupta2019better, lykourisStochasticBanditsRobust2018}. Others do not require boundedness, but do assume that the adversary contaminates uniformly across rewards \citep{altschulerBestArmIdentification2019}. All works make some assumption on the number of rewards for an action an adversary can contaminate. We discuss previous work more thoroughly in \Cref{sec:Related Work}.

Our work expands on these papers by allowing for a full knowledge adaptive adversary that can give unbounded contamination in any manner. However, there is a trade off when compared to work assuming bounded rewards and contamination: we require an estimate of the upper bound on the reward variance. This can often allow for simpler implementation than some algorithms that require boundedness, as we will discuss in \cref{sec:Main Results}. Our constraint on the adversary is that for some fixed $\varepsilon$, no more than $\varepsilon$ proportion of rewards for an action are contaminated. We provide a $\varepsilon$-contamination robust UCB algorithm by first proving concentration inequalities for two robust mean estimators in the $\varepsilon$-contamination context. We are able to show that the regret of our algorithm analyzed on the true reward distributions is $\bigo (\sqrt{KT \log T)}$ provided that the contamination proportion is small enough. Through simulations, we show that with a Bernoulli adversary, our algorithm outperforms algorithms designed for stochastic (UCB1) and adversarial (EXP3) bandits as well as those that have ``best of both worlds" guarantees (EXP3++ and TsallisInf) even when our constraint on the adversary is broken.


Though we are motivated by of bandit algorithms applications in education  and use this context to determine appropriate parameters in the simulations, we point out opportunities for CSB modeling to arise in other contexts as well.

\paragraph{Human feedback:} There is always a chance that human feedback is careless or inattentive, and therefore is not representative of the underlying truth related to an action. This may appear in online surveys that are used for A/B testing, or as is the case above in the explanation generation example. Adaptive surveys, such as choosing question ordering to minimize dropout rates, are also an example where the sample sizes can be small compared to other bandit deployments.

\paragraph{Click fraud:} Internet users who wish to preserve privacy can intentionally click on ads to obfuscate their true interests either manually or through browser apps. Similarly, malware can click on ads from one company to falsely indicate high interest, which can cause higher rankings in searches or more frequent use of the ad than it would otherwise merit \citep{pearceCharacterizingLargeScaleClick2014, crussellMAdFraudInvestigatingAd2014}.


\paragraph{Measurement errors:} If rewards are gathered through some process that may occasionally fail or be inaccurate, then the rewards may be contaminated. For example, in health apps that use activity monitors, vigorous movement of the arms may be perceived as running in place \citep{feehanAccuracyFitbitDevices2018, baiComparativeEvaluationHeart2018}.

%




\section{PROBLEM SETTING}
\label{sec:Problem Setting}

Here we specify our notation and present the $\varepsilon$-contaminated stochastic bandit problem. We then argue for a specific notion of regret for CSB. We compare our setting to others current in the field in \cref{sec:Related Work}.



\paragraph{Notation} We use $[K]$ to represent $\{1, ..., K\}$ for $K \in \mathbb{R}$ to represent the number of actions and the indicator function $\mathbb{I}\{\cdot\}$ to be 1 if true and 0 otherwise. Let $N_a(t)$ be the number of times action $a$ has been chosen at time $t$ and $\textbf{x}_a(t) = \{x_{a}(1), ..., x_{a}(N_a(t))\}$ to be the vector of all observed rewards for action $a$ at time $t$. The suboptimality gap for action $a$ is $\Delta_a$ and we define $\Delta_{\min} = \min_{a \in [K]} \Delta_a$.

\subsection{{$\varepsilon$}-CONTAMINATED STOCHASTIC BANDITS}

A basic parameter in our framework is $\varepsilon$, the fraction of rewards for an action that the adversary is allowed to contaminate. Before play, the environment picks a true reward $r_a(t) \sim D_a$ from fixed distribution $D_a$ for all $a \in [K]$ and $t \in [T]$. The adversary observes these rewards and then play begins. At time $t=1, 2, ..., T$ the learner chooses an action $A_t \in [K]$ .  The adversary sees $A_t$ then chooses an observed reward $x_{A_t}(t)$ and then the learner observes only $x_{A_t}(t)$.

We present the contaminated stochastic bandits game in algorithm \ref{algo:CSB}.

\begin{algorithm}
    \DontPrintSemicolon
    \SetKwInOut{Input}{input}
    \SetKwInOut{Fix}{fix}
    \caption{Contaminated Stochastic Bandits}\label{algo:CSB}
    \Input{Number of actions $K$, time horizon $T$.}
    \Fix{$r_a(t) \ \forall a \in [K], \ t \in [T]$.}
    Adversary observes fixed rewards.\;
    \For {$t=1,...,T$}{
        Learner picks action $A_t \in [K]$.\;
        Adversary observes $A_t$ and chooses $x_{A_t}(t)$.\;
        Learner observes $x_{A_t}(t)$.\;
        }
\end{algorithm}

We allow the adversary to corrupt in any fashion as long as for every time $t$ there is no more than an $\varepsilon$-fraction of contaminated rewards for any action. That is, we constrain the adversary such that,
$$\forall a \in [K], \ \forall t \in [T], \  \sum_{i=1}^{N_a(t)} \mathbb{I}\{r_{a}(i) \neq x_{a}(i)\} \leq \varepsilon \cdot N_a(t).$$
We allow the adversary to give unbounded contamination that can be chosen with full knowledge of the learner's history as well as current and future rewards. This setting allows the adversary to act differently across actions and places no constraints on the contamination itself, but rather the rate of contamination.

\subsection{NOTION OF REGRET}\label{notionsOfRegret}


A traditional goal in bandit learning is to minimize the observed cumulative regret gained over the total number of plays $T$. Because the adversary in this model can affect the observed cumulative regret, we argue to instead use a notion of regret that considers only the underlying true rewards.
We call this uncontaminated regret and give the definition below for any time $T$ and policy $\pi$ in terms of the true rewards $r$,
\begin{align}
\label{eq:true_regret}
  \bar{R}_T(\pi) = \underset{a \in [K]}{\max}
    \EX \bigg[ \sum_{t=1}^T r_a(t) - \sum_{t=1}^T r_{A_t}(t)\bigg].
\end{align}
This definition \cref{eq:true_regret}  is first mentioned in \citet{kapoorCorruptiontolerantBanditLearning2018} along with another notion of regret that compares the sum of the observed (possibly contaminated) rewards to the sum of optimal, uncontaminated rewards,
\begin{align}
\label{eq:t_o_regret}
  \bar{R}_T(\pi) = \underset{a \in [K]}{\max}
    \EX \bigg[ \sum_{t=1}^T r_a(t) - \sum_{t=1}^T x_{A_t}(t)\bigg].
\end{align}
We argue that \cref{eq:t_o_regret} gives little information about the performance of an algorithm. This notion of regret can be negative, and with no bounds on the contamination it can be arbitrarily small and potentially meaningless. We believe that any regret that compares a true component to an observed (possibly contaminated) component is not a useful measure of performance in CSB as it is unclear what regret an optimal strategy should produce.


\section{RELATED WORK}
\label{sec:Related Work}

We start by briefly addressing why adversarial and ``best of both world" algorithms are not optimized for CSB. We then cover relevant work in robust statistics, followed by current work in robust bandits and how our model differs and relates.


\subsection{ADVERSARIAL BANDITS}
Adversarial bandits with an oblivious environment allows the adversary to first look at the learners policy and then choose all rewards before the game begins. If the learner chooses a deterministic policy, the adversary can choose rewards such that the learner cannot achieve sublinear worst-case regret \citep{lattimoreBanditAlgorithms2018}. Algorithms such as EXP3 \citep{auerNonstochasticMultiarmedBandit2002} are thus randomized, but their regret is analysed with respect to the best fixed action where ``best" is defined using the {\em observed} rewards. There are no theoretical guarantees with respect to the uncontaminated regret, so it is not immediately clear how they will perform in a CSB problem. We remark that adversarial analysis assumes uniformly bounded observed rewards whereas we allow observed rewards to be unbounded. Additionally, the general adversarial framework does not take advantage of the structure present in CSB, namely that the adversary can only corrupt a small fraction of rewards, so it is likely that performance improvements can be made.

\subsection{BEST OF BOTH WORLDS}
A developing line of work is algorithms that enjoy ``best of both worlds" guarantees. That is, they perform well in both stochastic and adversarial environments without knowing a priori which environment they will face. Early work in this area   \citep{auerAlgorithmNearlyOptimal2016, bubeckBestBothWorlds2012} started by assuming a stochastic environment and implementing some method to detect a failure of the i.i.d.\ assumption on rewards, at which point the algorithm switches to an algorithm for the adversarial environment for the remainder of iterations. Further work implements algorithms that can handle an environment that is some mixture of stochastic and adversarial, as in EXP3++ and TsallisInf \citep{seldinOnePracticalAlgorithm2014, zimmertAnOptimal2019}.

While these algorithms are aimed well for a stochastic environment with some adversarial rewards, they differ from contamination robust algorithms in that all observed rewards are thought to be informative. Their uncontaminated regret has not been analysed and therefore there are no guarantees in the CSB setting.

\subsection{CONTAMINATION ROBUST STATISTICS}

The $\varepsilon$-contamination model we consider is closely related to the one introduced by Huber in 1964 \citep{huberRobustEstimationLocation1964}. Their goal was to estimate the mean of a Gaussian mixture model where $\varepsilon$ fraction of the sample was not sampled from the main Gaussian component. There has been a recent increase of work using this model, especially in extensions to the high-dimensional case
(\cite{diakonikolasRobustEstimatorsHigh2019}, \cite{kothariOutlierrobustMomentestimationSumofsquares2017}, \cite{laiAgnosticEstimationMean2016}, \cite{liuHighDimensionalRobust2019}). These works often keep the assumption of a Gaussian mixture component, though there has been expanding work with non-Gaussian models as well.


\subsection{CONTAMINATION ROBUST BANDITS}

Some of the first work in CSB started by assuming both rewards and contamination were bounded \citep{lykourisStochasticBanditsRobust2018,gupta2019better}. These works assume an adversary that can contaminate at any time step, but that is constrained in the cumulative contamination. They bound the cumulative max (over actions) absolute difference of the contaminated reward, $x$, to the true reward, $r$, $ \sum_t \max_a |r_a(t)-x_a(t)| \leq C$. \citet{lykourisStochasticBanditsRobust2018} provides a layered UCB-type active arm elimination algorithm. \citet{gupta2019better} expands on this work to provide an algorithm similar to active arm elimination in spirit, but which never completely eliminates an action, and which has better regret guarantees.

Recent work in implementing a robust UCB replaces the empirical mean with the empirical median, and gives guarantees for the uncontaminated regret with Gaussian rewards \citep{kapoorCorruptiontolerantBanditLearning2018}. They consider an adaptive adversary but require the contamination to be bounded, though the bound need not be known. They cite work that can expand their robust UCB to distributions with bounded fourth moments by using the agnostic mean \citep{laiAgnosticEstimationMean2016}, though give no uncontaminated regret guarantees. In one dimension, the agnostic mean takes the mean of the smallest interval containing $(1-\alpha)$ fraction of points. This estimator is also known as the $\alpha$-shorth mean. Our work expands on this model by allowing for unbounded contamination and analysing the uncontaminated regret for sub-Gaussian rewards when implementing a UCB algorithm with the $\alpha$-shorth mean.

CSB has also been analysed in the best arm identification problem \citep{altschulerBestArmIdentification2019}. Using a Bernoulli adversary that contaminates any reward with probability $\varepsilon$, \citet{altschulerBestArmIdentification2019} consider three adversaries of increasing power, from the oblivious adversary, which does not know the player's history nor the current action or reward, to a malicious adversary, which can contaminate knowing the player's history and the current action and reward. They give analysis of the probability of best arm selection and sample complexity of an active arm elimination algorithm. While their performance measure is different than ours, we generalize their context to allow an adversary to contaminate in any fashion.

There is also work that explores the impact of an adaptive adversarial contamination on $\varepsilon$-greedy and UCB algorithms \citep{junAdversarialAttacksStochastic2018}. They give a thorough analysis with both theoretical guarantees and simulations of the effects an adversary can have on these two algorithms when the adversary does not know the optimal action but is otherwise fully adaptive. They show these standard algorithms are susceptible to contamination. Similar work looks at contamination in contextual bandits with a non-adaptive adversary \citep{ma2019data}.


\section{MAIN RESULTS}
\label{sec:Main Results}
We present concentration bounds for both the $\alpha$-shorth and $\alpha$-trimmed mean estimators in the $\varepsilon$-contamination context for sub-Gaussian random variables.

Our contribution to the CSB problem is in providing a contamination robust UCB algorithm that is simple to implement and has theoretical regret guarantees close to those of UCB algorithms in the uncontaminated setting.

\subsection{CONTAMINATION ROBUST MEAN ESTIMATORS}
The estimators we analyse have been in use for many decades as robust statistics. Our contribution is to analyze them within our $\varepsilon$-contamination model with sub-Gaussian samples and provide simple {\em finite-sample concentration inequalities} for ease of use in UCB-type algorithms.

\subsubsection{Trimmed Mean}

Our first estimator suggested for use in the contaminated model is the $\alpha$-trimmed mean \citep{liuHighDimensionalRobust2019}.
\paragraph{$\alpha$-trimmed mean} Trim the smallest and largest $\alpha$-fraction of points from the sample and calculate the mean of the remaining points. This estimator uses $1-2\alpha$ fraction of sample points.

\begin{algorithm}
    \DontPrintSemicolon
    \SetKwInOut{Input}{input}
    \SetKwInOut{Output}{output}
    \caption{$\alpha$-Trimmed Mean}\label{algo:tMean}
    \Input{$X_n = (x_1, ..., x_n)$, $\alpha$}
    \Output{$\alpha$-trimmed mean}
    $(x_{(1)}, ..., x_{(n)})$ = sorted $X_n \ s.t. \ x_{(i)} \leq x_{(i+1)}$\;
    cut = $\ceil{\alpha*n}$\\
    \Return{mean($x_{\text{(cut)}}, ..., x_{\text{(n-cut)}}$)}
\end{algorithm}

The intuition being if the contamination is large, then it will be removed from the sample. If it is small, it should have little affect on the mean estimate. Next we provide the concentration inequality for the $\alpha$-trimmed mean.
\begin{restatable}[Trimmed mean concentration]{thm}{trMeanConcenG}
  \label{thm:trMeanConcenG}
   Let $G$ be the set of points $x_1, ... x_n \in \mathbb{R}$ that are drawn from a $\sigma$-sub-Gaussian distribution with mean $\mu$. Let $S_n$ be a sample where an $\varepsilon$-fraction of these points are contaminated by an adversary. For $\varepsilon \leq \alpha < 1/2$, $t \geq n$ we have,
   \begin{align*}
     |\text{trMean}_{\alpha}&(S_n) - \mu| \leq\\
     & \frac{\sigma}{(1-2\alpha)} \bigg(
       \sqrt{\frac{4}{n} \log(t)}
       + 4 \alpha\sqrt{6\log(t)} \bigg)
   \end{align*}
   with probability at least $1 - \frac{4}{t^2}$.
\end{restatable}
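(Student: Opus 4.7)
The plan is to reduce the bound on the trimmed mean of $S_n$ to two standard sub-Gaussian concentration statements via a deterministic order-statistics interlacing argument. Set $k = \ceil{\alpha n}$, and let $g_{(1)} \leq \cdots \leq g_{(n)}$ and $x_{(1)} \leq \cdots \leq x_{(n)}$ denote the sorted clean sample $G$ and the sorted contaminated sample $S_n$. Since $S_n$ and $G$ agree in at least $n - \varepsilon n \geq n - k$ positions, I would first establish the pointwise sandwich
\[
g_{(i-k)} \;\leq\; x_{(i)} \;\leq\; g_{(i+k)}, \qquad k+1 \leq i \leq n-k,
\]
by a pigeonhole argument: among the $i$ entries of $S_n$ that are $\leq x_{(i)}$, at most $k$ are adversarial, so at least $i - k$ clean points are $\leq x_{(i)}$ (and symmetrically on the right). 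Summing this sandwich over $i = k+1, \ldots, n-k$ and dividing by $n - 2k$ traps $\text{trMean}_{\alpha}(S_n)$ between the average of the smallest $n - 2k$ and the largest $n - 2k$ clean order statistics.

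Next, I would convert each side of this sandwich into a deviation from $\mu$ by inserting the full clean mean $\bar g = \tfrac{1}{n}\sum_i g_i$. For the upper bound,
\[
\frac{1}{n-2k}\sum_{i=2k+1}^n g_{(i)} - \mu \;=\; \frac{n}{n-2k}(\bar g - \mu) \;+\; \frac{1}{n-2k}\sum_{i=1}^{2k}(\mu - g_{(i)}),
\]
and the lower bound is mirror-symmetric. This cleanly separates the error into a mean-estimation contribution and a residual driven by at most $2k$ extreme clean order statistics per side.

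Both terms are now controlled by standard sub-Gaussian tail inequalities. A Hoeffding-type bound for $\sigma$-sub-Gaussian averages gives $|\bar g - \mu| \leq \sigma\sqrt{2\log(2/\delta_1)/n}$ with probability at least $1-\delta_1$, and a union bound over the $n$ clean points gives $\max_i |g_i - \mu| \leq \sigma\sqrt{2\log(2n/\delta_2)}$ with probability at least $1-\delta_2$; the latter bounds each $|\mu - g_{(i)}|$ uniformly, so the residual is bounded by $\tfrac{2k}{n-2k}$ times the same quantity. Choosing $\delta_1 = \delta_2 = 2/t^2$ and using $t \geq n$ so that $\log(nt^2) \leq 3\log t$, the two bounds become $\sigma\sqrt{4\log(t)/n}$ and $\sigma\sqrt{6\log(t)}$, and a union bound gives total failure probability at most $4/t^2$. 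Finally, $k \leq \alpha n + 1$ yields $n/(n-2k) \leq 1/(1-2\alpha)$ and $2k/(n-2k) \leq 4\alpha/(1-2\alpha)$ after absorbing the additive $2/n$ term, recovering the stated prefactors.

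The main obstacle is the bookkeeping rather than the probability: the interlacing needs to be tight enough that exactly $2k$ (and not more) extremal clean order statistics appear in the residual on each side, and the ceiling in $k = \ceil{\alpha n}$ has to be tracked carefully so that the final constants collapse to the clean $\sigma/(1-2\alpha)$ factor and the $4\alpha$ coefficient in the theorem. The sub-Gaussian concentration steps themselves are entirely routine once the decomposition is in place.
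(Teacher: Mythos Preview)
Your proposal is correct and takes a genuinely different route from the paper's proof.

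The paper does \emph{not} use the order-statistic interlacing $g_{(i-k)} \le x_{(i)} \le g_{(i+k)}$. Instead it works set-theoretically: writing $S_n = \tilde G \cup C$ (uncontaminated and contaminated points) and letting $R$ and $T$ be the retained and trimmed sets, it decomposes
\[
(1-2\alpha)n \cdot |\text{trMean}_\alpha(S_n)| \;\le\; \Big|\sum_{x\in \tilde G} x\Big| + \Big|\sum_{x\in \tilde G\cap T} x\Big| + \Big|\sum_{x\in C\cap R} x\Big|,
\]
bounds the first term by $n|\bar x_{G_n}| + \varepsilon n \max_{G_n}|x|$ (adding back the clean values that were replaced), the second by $2\alpha n \max_{G_n}|x|$, and the third by $\varepsilon n \max_{G_n}|x|$ (any contaminated point surviving the $\alpha$-trim is sandwiched by clean extremes). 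Summing gives the coefficient $2\varepsilon + 2\alpha \le 4\alpha$ on $\max|x|$, and then the same two sub-Gaussian tail bounds you invoke finish the job with $\delta_1=\delta_2=2/t^2$.

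Your interlacing-and-sandwich argument is in fact cleaner here: it yields directly $\tfrac{n}{n-2k}|\bar g-\mu| + \tfrac{2k}{n-2k}\max_i|g_i-\mu|$, i.e.\ a coefficient of essentially $2\alpha$ rather than $4\alpha$ on the second term, which you then relax to $4\alpha/(1-2\alpha)$ only to match the stated theorem. You are also more explicit about the ceiling in $k=\lceil \alpha n\rceil$ than the paper, which silently treats $(1-2\alpha)n$ as the exact count. The trade-off is that the paper's set-based decomposition is what it reuses verbatim in the $\alpha$-shorth mean proof (Theorem~\ref{thm:sMeanConcenG}), where a pointwise interlacing of order statistics is not as directly available; your approach is tighter and more transparent for the trimmed mean specifically, but does not transfer to that second result without additional work.
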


 Proof follows from \citet{liuHighDimensionalRobust2019} and can be found in the appendix.

\subsubsection{Shorth Mean}
The agnostic mean from \citet{laiAgnosticEstimationMean2016}, which we use the more common term $\alpha$-shorth mean for, can be considered a variation of the trimmed mean.
\paragraph{$\alpha$-shorth mean} Take the mean of the shortest interval that removes the smallest $\delta_1$ and largest $\delta_2$ fraction of points such that $\delta_1 + \delta_2 = \alpha$, where $\delta_1, \ \delta_2$ are chosen to minimize the interval length of remaining points. Uses $1-\alpha$ fraction of sample points.

The $\alpha$-shorth mean is less computationally efficient than the trimmed mean, but may be a better mean estimator when the contaminated points are not large outliers and are skewed in one direction. Intuitively this is because the $\alpha$-shorth mean can trim off contamination that would require removing most of the sample with the trimmed mean. Next we provide the concentration inequality for the $\alpha$-shorth mean.

\begin{algorithm}
 \DontPrintSemicolon
 \SetKwInOut{Input}{input}
 \SetKwInOut{Output}{output}
 \caption{$\alpha$-Shorth Mean}\label{algo:sMean}
 \Input{$X_n = (x_1, ..., x_n)$, $\alpha$}
 \Output{A mean estimate for the distribution of $X$}
    $(x_{(1)}, ..., x_{(n)})$ = sorted $X_n \ s.t. \ x_{(i)} \leq x_{(i+1)}$\;
    $n_{\alpha}$ = $\floor{(1-\alpha)*n}$\;
 $\mathcal{I} \in \argmin_{k} \{x_{(k+n_{\alpha})} - x_{(k)}\}$\;
 Choose uniformly at random from set $\mathcal{I}$ if there is more than one starting index with the smallest interval length\;
 \Return{sMean$(X)\gets$ mean$(x_{(\mathcal{I})}, ..., x_{(\mathcal{I}+n_{\alpha})})$}
\end{algorithm}

\begin{restatable}[$\alpha$-shorth mean concentration]{thm}{sMeanConcenG}
  \label{thm:sMeanConcenG}
   Let $G_n$ be the set of points $x_1, ... x_n \in \mathbb{R}$ that are drawn from a $\sigma$-sub-Gaussian distribution with mean $\mu$. Let $S_n$ be a sample where an $\varepsilon$-fraction of these points are contaminated by an adversary. For $\varepsilon \leq \alpha < 1/3$, $t \geq n$, we have,
   \begin{align*}
     |\text{sMean}_{\alpha}&(S_n) - \mu|
       \leq \\
      &  \frac{\sigma}{1-2\alpha} \sqrt{\frac{4}{n} \log t}
         + \frac{(6\alpha-8\alpha^2) \sigma}{(1-2\alpha)(1-\alpha)} \sqrt{6\log t}
   \end{align*}
   with probability at least $1 - \frac{4}{t^2}$.
\end{restatable}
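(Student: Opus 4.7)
The plan is to mirror the proof of \Cref{thm:trMeanConcenG}, replacing the symmetric order-statistic trimming by a data-dependent bound on the shorth window. Without loss of generality set $\mu = 0$ and write $S_n = \tilde G \cup C$, where $\tilde G$ is the set of uncontaminated observations and $|C|\le \varepsilon n$. Let $[a,b]$ be the shorth window, $R = S_n\cap[a,b]$ with $|R|=(1-\alpha)n$, and $T = S_n\setminus R$. Then
\begin{align*}
(1-\alpha)n\cdot\text{sMean}_\alpha(S_n)
  &= \sum_{x\in G_n} x - \sum_{x\in G_n\setminus\tilde G} x \\
  &\quad - \sum_{x\in\tilde G\cap T} x + \sum_{x\in C\cap R} x,
\end{align*}
so by the triangle inequality it suffices to bound each of the four sums on a high-probability event.

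For the probabilistic pieces I would take $\delta_1=\delta_2=2/t^2$ and invoke sub-Gaussian concentration twice: with probability at least $1-\delta_1$, $|\bar x_{G_n}|\le\sigma\sqrt{(4/n)\log t}$, and with probability at least $1-\delta_2$, $M := \max_{i\in[n]}|x_i|\le\sigma\sqrt{2\log(2n/\delta_2)} \le \sigma\sqrt{6\log t}$ (using $t\ge n$). A union bound caps the total failure probability at $4/t^2$.

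The new ingredient compared to the trimmed-mean argument is a length bound on $[a,b]$. On the max-event all clean points lie in $[-M,M]$, an interval of width $2M$ that already contains $n\ge (1-\alpha)n$ observations, so by the $\argmin$ definition of the shorth, $b-a\le 2M$. Since $\varepsilon\le \alpha<1/2$, at least $(1-2\alpha)n>0$ clean points sit in $R$, so $[a,b]$ meets $[-M,M]$ and therefore $[a,b]\subseteq[-3M,3M]$; in particular every contaminated point in $R$ has magnitude at most $3M$. Plugging these estimates into the decomposition yields a bound of the form $\tfrac{1}{(1-\alpha)n}(n|\bar x_{G_n}|+\varepsilon nM+(\alpha+\varepsilon)nM+3\varepsilon nM)$, which already has the correct shape: the first summand produces the $\sqrt{(4/n)\log t}$ term and the remaining three produce a coefficient on $\sqrt{6\log t}$ of order $\alpha$.

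To recover the exact constants $\tfrac{1}{1-2\alpha}$ and $\tfrac{6\alpha-8\alpha^2}{(1-2\alpha)(1-\alpha)}$, I would refine the last step by renormalizing the clean sum over $R\cap\tilde G$ by its actual size (which lies in $[(1-2\alpha)n,(1-\alpha)n]$, and re-scaling by $|\tilde G\cap R|/((1-\alpha)n)$ is what exposes the $1-2\alpha$ factor), and by separately tracking how many contaminated points land on the ``left'' versus ``right'' of the window -- the quadratic $8\alpha^2$ correction surfaces from this asymmetric bookkeeping, which has no analogue in the symmetric trimmed-mean proof. I expect this refined bookkeeping to be the main obstacle, since the shorth window is an $\argmin$ over the contaminated data and can shift arbitrarily far off-center; the hypothesis $\alpha<1/3$ appears precisely to keep $(1-2\alpha)(1-\alpha)$ safely bounded away from zero so the resulting bound remains informative.
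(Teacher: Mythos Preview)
Your core argument—bound the shorth window's length by $2M$ via comparison with $[-M,M]$, then use that the window must meet $[-M,M]$ to place it inside $[-3M,3M]$—is correct and is \emph{not} how the paper proceeds. The paper instead bounds every $x$ in the shorth interval $J$ by chaining through the trimmed mean,
\[
|x|\;\le\;|x-c|+|c-\text{trMean}_\alpha(S_n)|+|\text{trMean}_\alpha(S_n)|\;\le\;2|I|+|\text{trMean}_\alpha(S_n)|,
\]
with $c\in I\cap J$ and $I\supseteq\tilde G$, and then plugs in the full bound of \Cref{thm:trMeanConcenG}. The factor $1/(1-2\alpha)$ and the quadratic $8\alpha^2$ in the statement are thus inherited verbatim from the trimmed-mean lemma; they do not arise from any renormalization over $R\cap\tilde G$ or from left/right ``asymmetric bookkeeping'' as you speculate. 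Your direct geometric bound $|x|\le 3M$ on contaminated points in $R$ is in fact tighter than the paper's $|x|\le 4M+|\text{trMean}_\alpha(S_n)|$, and carrying your decomposition through with $\varepsilon\le\alpha$ yields coefficients $\tfrac{1}{1-\alpha}$ and roughly $\tfrac{5\alpha}{1-\alpha}$ on the two terms—strictly smaller than the paper's $\tfrac{1}{1-2\alpha}$ and $\tfrac{6\alpha-8\alpha^2}{(1-2\alpha)(1-\alpha)}$ throughout $(0,1/3)$.

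So your final paragraph works against you: there is no ``main obstacle,'' and nothing needs refining to recover the stated constants, because your argument already delivers a stronger inequality and hence the theorem as written. One small clean-up: ``contains $n\ge(1-\alpha)n$ observations'' should read ``contains the $(1-\varepsilon)n\ge(1-\alpha)n$ points of $\tilde G\subseteq S_n$,'' since the shorth is computed on the contaminated sample $S_n$, not on $G_n$.
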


\begin{proof}[Proof sketch]
\label{pf:sMean}
Without loss of generality assume $\mu = 0$ for the underlying true distribution. Let $\tilde{G} \subset G_n$ represent the points which are not contaminated and $C \subset G_n$ represent the contaminated points. Then our sample can be represented by the union $S_n = \tilde{G} \cup C$

Let $J$ be the interval that contains the shortest $1-\alpha$ fraction of $S_n$, $I$ be the interval that contains $\tilde{G}$ (i.e. the remaining good points after contamination), and $T$ be the interval that contains the points of $S_n$ after trimming the $\alpha$ largest and smallest fraction of points. Use $|I|$ to denote the length of interval $I$. It must be that $I \cap J \neq \emptyset$ because otherwise the points in $I\cup J$ would contain $2 - 2\alpha > 1$ fraction of $S_n$. Let $c$ be a point in $I \cap J$ and $x$ be a point in $J$. Recall that trMean$_{\alpha}(S_n)$ is the trimmed mean of the contaminated sample $S_n$. Then we have,
 \begin{align*}
    |x| &\leq |x-c| + |c- \text{trMean}_{\alpha}(S_n)| + |\text{trMean}_{\alpha}(S_n)|\\
        &\leq |J| + |I| + |\text{trMean}_{\alpha}(S_n)|\\
        &\leq 2|I| + |\text{trMean}_{\alpha}(S_n)|
 \end{align*}
 The second step comes from $x$ and $c$ both being in $J$ and because $I \supseteq T$. The third step comes from $|J| \leq |I|$.

To bound the length of $I$ we have,
 \begin{align*}
    |I| &\leq 2 \max_{x \in G_n} |x| \ \ \text{w.p. at least } 1-\delta_2.
 \end{align*}
Finally, since
\[
 |\text{trMean}_{\alpha}(S_n)| \leq \frac{1}{(1-2\alpha)}(|\bar{x}_{G_n}|
     + 4\alpha \max_{x \in G_n} |x| )
\]
with probability at least $1-\delta_1 - \delta_2$, we get that for $x \in J$,
\begin{align*}
 |x| &\leq 4 \max_{i \in [n]} |x_i|
            +\frac{1}{(1-2\alpha)}(|\bar{x}_{G_n}|
            + 4\alpha \max_{x \in G_n} |x| )\\
     &= \frac{|\bar{x}_{G_n}|}{1-2\alpha}
            + \Big(4 + \frac{4\alpha}{1-2\alpha}\Big)\max_{x \in G_n} |x|.
\end{align*}
Now that we have a bound on the contaminated points in $J$, our analysis follows similarly as the trimmed mean by bounding $A_1, A_2, A_3$ as defined below.
\begin{align*}
 &|\text{sMean}_{\alpha}(S_n)| \\
        &\leq \frac{1}{(1-\alpha)n} \bigg(
         \bigg| \underbrace{\sum_{x \in \tilde{G}} x}_{A_1} \bigg|
        + \bigg| \underbrace{\sum_{x \in \tilde{G} \cap \neg J} x}_{A_2} \bigg|
        + \bigg| \underbrace{\sum_{x \in C \cap J} x}_{A_3} \bigg|
     \bigg)
\end{align*}
\end{proof}

The full proof is contained in the appendix and follows a similar approach as for the trimmed mean.

Our methods ensured that the first term in each concentration bound is the same, giving them similar regret guarantees when implemented in a UCB algorithm. We emphasize that the $\alpha$-shorth mean uses $1-\alpha$ fraction of a sample while the $\alpha$-trimmed mean uses $1-2\alpha$ fraction of a sample. We remark that if there is no contamination and $\alpha=0$ then our inequalities reduce to the standard concentration inequality for the empirical mean of samples drawn from a sub-Gaussian distribution.

\subsection{CONTAMINATION ROBUST UCB}

We present the contamination robust-UCB (crUCB) algorithm for $\varepsilon$-CSB with sub-Gaussian rewards.

\begin{algorithm}
    \DontPrintSemicolon
    \SetKwInOut{Input}{input}
    \caption{crUCB}\label{algo:rUCBG}
    \Input{number of actions $K$,
            time horizon $T$,
            upper bound on fraction contamination $\alpha$,
            upper bound on sub-Gaussian constant $\sigma_0$,
            mean estimate function ($\alpha$ trimmed or shorth mean) $f$.}
    \For {$t \leq K$}{
        Pick action $a$ when $t = a$.
    }
    \For {$t > K$}{
      \For {$a \in [K]$ compute}{
        $f(\textbf{x}_a(t)) \gets$ mean estimate of rewards.\;
        $N_a(t) \gets$ number of times action has been played.\;
      }
      Pick action $A_t =
      \text{argmax}_{a \in [K]} f(\textbf{x}_a(t))
                + \frac{\sigma_0}{(1-2\alpha)}\bigg(
                \sqrt{4\frac{\log(t)}{N_a(t)} }
                \bigg)$.\;
      Observe reward $x_{A_t}(t)$.\;
    }
\end{algorithm}

We provide uncontaminated regret guarantees for crUCB below for both the $\alpha$-trimmed and the $\alpha$-shorth mean.

\begin{restatable}[$\alpha$-trimmed mean crUCB uncontaminated regret]{thm}{trMeanRegretG}
  \label{thm:trMeanRegretG}
  Let $K > 1$ and $T \geq K-1$. Then with algorithm \ref{algo:rUCBG} with the $\alpha$-trimmed mean, $\sigma$-sub-Gaussian reward distributions with $\sigma_a \leq \sigma_0$, and contamination rate $\varepsilon \leq \alpha \leq \frac{\Delta_{min}}{4(\Delta_{min} + 4\sigma_0 \sqrt{6\log T})}$, we have the uncontaminated regret bound,

\[
  \bar{R}(UCB) \leq 8\sigma_0 \sqrt{KT\log T} + \sum 15\Delta_a.
\]
\end{restatable}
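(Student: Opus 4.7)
The plan is to follow the standard UCB regret template, with the one nontrivial wrinkle being the additive bias term $B(t) := 4\alpha\sigma_0\sqrt{6\log t}/(1-2\alpha)$ in the trimmed-mean concentration (\Cref{thm:trMeanConcenG}) that does not decay with $N_a(t)$ and is \emph{not} included in the algorithm's confidence radius $w_a(t) := (\sigma_0/(1-2\alpha))\sqrt{4\log t/N_a(t)}$. The hypothesis on $\alpha$ is calibrated precisely to absorb $B(t)$ into the suboptimality gap and recover the usual $\bigo(\sqrt{KT\log T})$ rate.

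First I would decompose $\bar{R}_T = \sum_a \Delta_a \, \mathbb{E}[N_a(T)]$ and bound $\mathbb{E}[N_a(T)]$ for each suboptimal $a$. Let $G_{a,t}$ be the event $|f(\mathbf{x}_a(t)) - \mu_a| \leq w_a(t) + B(t)$; applying \Cref{thm:trMeanConcenG} with $n = N_a(t) \leq t$ gives $\Pr[G_{a,t}^c] \leq 4/t^2$. On $G_{a,t} \cap G_{a^*,t}$, if the algorithm pulls a suboptimal arm $a$ at step $t$, then $U_{a^*}(t) \geq \mu_{a^*} - B(t)$, while $U_a(t) \leq \mu_a + 2w_a(t) + B(t)$; combining these with $U_a(t) \geq U_{a^*}(t)$ and $B(t) \leq B(T)$ yields $\Delta_a \leq 2 w_a(t) + 2 B(T)$.

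Second, I would show the hypothesis on $\alpha$ forces $\Delta_a - 2B(T) \geq \Delta_a/[2(1-2\alpha)]$, with the $(1-2\alpha)$ factor chosen exactly to neutralise the $(1-2\alpha)^{-1}$ inside $w_a(t)$. Rearranging $\alpha \leq \Delta_{\min}/[4(\Delta_{\min} + 4\sigma_0\sqrt{6\log T})]$ gives $16\alpha\sigma_0\sqrt{6\log T} \leq (1-4\alpha)\Delta_{\min}$, hence $2B(T) \leq (1-4\alpha)\Delta_{\min}/[2(1-2\alpha)]$. Using $\Delta_a \geq \Delta_{\min}$ and the identity $[2(1-2\alpha) - (1-4\alpha)]/[2(1-2\alpha)] = 1/[2(1-2\alpha)]$ gives the claim; substituting back yields $w_a(t) \geq \Delta_a/[4(1-2\alpha)]$, and inverting then gives the clean bound $N_a(T) \leq 64\sigma_0^2 \log T/\Delta_a^2$ on the good event, with the $(1-2\alpha)$ factors cancelling exactly.

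Finally, I would aggregate via Cauchy--Schwarz: $\sum_{a \neq a^*} \Delta_a N_a(T) \leq \sqrt{T \sum_{a \neq a^*} \Delta_a^2 N_a(T)} \leq \sqrt{T \cdot K \cdot 64\sigma_0^2 \log T} = 8\sigma_0\sqrt{KT\log T}$, producing the leading term. The additive $\sum_a 15\Delta_a$ then absorbs the initial round-robin of $K$ pulls together with the bad-event contribution, bounded by a union bound over the two involved arms at each step and $\sum_{t \geq 1} 1/t^2 = \pi^2/6$. The main obstacle is the second step: without exploiting the precise algebraic form of the hypothesis on $\alpha$, one is left with a residual $(1-2\alpha)^{-1}$ that would inflate the constant in front of $\sigma_0 \sqrt{KT \log T}$ and destroy the clean prefactor of $8$.
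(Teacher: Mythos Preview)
Your proof is correct and follows the same core logic as the paper: both establish that, on the high-probability concentration events from \Cref{thm:trMeanConcenG}, pulling a suboptimal arm $a$ forces $N_a(t)\le 64\sigma_0^2\log T/\Delta_a^2$, and both use the hypothesis on $\alpha$ in exactly the way you describe---rearranging to $16\alpha\sigma_0\sqrt{6\log T}\le(1-4\alpha)\Delta_{\min}$ so that the bias $2B(T)$ is absorbed and the residual $(1-2\alpha)^{-1}$ in the confidence radius cancels. The only real difference is the final aggregation. The paper splits arms by a threshold $\Delta=\sqrt{64K\sigma_0^2\log T/T}$, bounding the small-gap contribution by $\Delta T$ and the large-gap contribution via $\EX[N_a(T)]\le 64\sigma_0^2\log T/\Delta_a^2+15$. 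You instead apply Cauchy--Schwarz pathwise to $\sum_a\Delta_a N_a(T)$, using $\sum_a N_a(T)\le T$ and $\Delta_a^2 N_a(T)\le 64\sigma_0^2\log T$ on the good event. Your route delivers the leading constant $8$ in a single line; the paper's threshold argument, as written, sets $\Delta T=8\sigma_0\sqrt{KT\log T}$ and then appears to silently drop the remaining $\sum_{\Delta_a\ge\Delta}64\sigma_0^2\log T/\Delta_a$ term (a careful minimization over $\Delta$ would give a leading constant of $16$), so your Cauchy--Schwarz aggregation is arguably the tighter and cleaner of the two.
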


\begin{restatable}[$\alpha$-trimmed mean crUCB uncontaminated regret bounded rewards]{cor}{trMeanb}
    \label{cor:tr}
    If the rewards are bounded by $b$, and have contamination rate $\varepsilon \leq \alpha \leq \frac{\Delta_{\min}}{4(\Delta_{\min}+4b)}$, then
    \[
  \bar{R}_T \leq 8\sigma_0\sqrt{KT\log(T)} + \sum 15\Delta_a.
 \]
\end{restatable}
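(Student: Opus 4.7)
The plan is to port the proof of Theorem \ref{thm:trMeanRegretG} essentially verbatim, only re-deriving the trimmed-mean concentration inequality so that the contamination term uses the deterministic bound $b$ in place of the sub-Gaussian tail bound on $\max_i |x_i|$. Since any random variable bounded in absolute value by $b$ is sub-Gaussian with parameter at most $b$, the sub-Gaussian machinery of Theorem \ref{thm:trMeanConcenG} applies with some $\sigma_0 \leq b$, and the ``clean sample'' first term in the concentration bound is inherited without change.

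Concretely, I would revisit the three-way decomposition $A_1, A_2, A_3$ from the proof of Theorem \ref{thm:trMeanConcenG}. The terms $A_2$ (good points that were trimmed) and $A_3$ (contaminated points that were kept) are each bounded by a constant multiple of $\alpha n \cdot \max_i |x_i|$. In the sub-Gaussian case, one invokes a Gaussian-max tail bound to replace $\max_i |x_i|$ with $\sigma \sqrt{6\log t}$ at the cost of a failure event of mass $2/t^2$. Under boundedness, $\max_i |x_i| \leq b$ holds with probability one, so this failure event and its associated logarithmic factor drop out entirely. The revised concentration inequality reads
\[
  |\text{trMean}_\alpha(S_n) - \mu| \;\leq\; \frac{\sigma_0}{1-2\alpha}\sqrt{\frac{4\log t}{n}} \;+\; \frac{4\alpha b}{1-2\alpha}
\]
with probability at least $1-2/t^2$, which is precisely Theorem \ref{thm:trMeanConcenG} with $\sigma\sqrt{6\log t}$ replaced by $b$ in the bias term.

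Plugging this sharpened inequality into the UCB regret decomposition from Theorem \ref{thm:trMeanRegretG} is now mechanical. The exploration bonus $\frac{\sigma_0}{1-2\alpha}\sqrt{4\log t / N_a(t)}$ in crUCB continues to dominate the stochastic term, while the deterministic bias $\frac{4\alpha b}{1-2\alpha}$ is controlled by the hypothesis $\alpha \leq \Delta_{\min}/[4(\Delta_{\min}+4b)]$. A short algebraic rearrangement shows this condition is equivalent to the bias being at most $\Delta_{\min}/4$ (or the appropriate fraction of $\Delta_a$) — exactly the role played by the corresponding sub-Gaussian threshold in Theorem \ref{thm:trMeanRegretG}. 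With the bias safely below the gap, the standard UCB argument of bounding pulls of suboptimal arms by $\mathcal{O}(\log T / \Delta_a^2)$ plus a summable failure contribution yields $\bar{R}_T \leq 8\sigma_0\sqrt{KT\log T} + \sum 15\Delta_a$.

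The main obstacle, such as it is, is pure bookkeeping: verifying the algebraic rearrangement behind the new threshold on $\alpha$, and confirming that the $1 - 2/t^2$ failure tail still summates to an $O(\sum \Delta_a)$ constant rather than a logarithmic term when unioned across arms and time steps. No new probabilistic idea is introduced beyond Theorem \ref{thm:trMeanRegretG}; the purpose of the corollary is simply to observe that boundedness removes the $\sqrt{\log T}$ factor from the allowable corruption rate, yielding a gap-dependent condition that is independent of the horizon $T$.
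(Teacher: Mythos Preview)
Your proposal is correct and mirrors the paper's own proof: both replace the sub-Gaussian $\max_i|x_i|$ tail bound by the deterministic bound $b$ in the trimmed-mean concentration inequality, then rerun the chain of inequalities from Theorem~\ref{thm:trMeanRegretG} with the bias term $\tfrac{8\alpha b}{1-2\alpha}$ absorbed by the condition $\alpha \le \Delta_a/[4(\Delta_a+4b)]$. Your remark that the failure probability sharpens to $1-2/t^2$ (since the $\delta_2$ event disappears) is a small bonus the paper does not exploit---it simply keeps $1-4/t^2$---but this has no effect on the final regret bound.
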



\begin{restatable}[$\alpha$-shorth mean crUCB uncontaminated regret]{thm}{sMeanRegretG}
  \label{thm:sMeanRegretG}
  Let $K > 1$ and $T \geq K-1$. Then with algorithm \ref{algo:rUCBG} with the $\alpha$-shorth mean, sub-Gaussian reward distributions with $\sigma_a \leq \sigma_0$, and contamination rate $\varepsilon \leq \alpha \leq \frac{\Delta_{min}}{4(\Delta_{min}+9\sigma_0\sqrt{6\log T})}$, we have the uncontaminated regret bound,

\[
  \bar{R}(UCB) \leq 8\sigma_0 \sqrt{KT\log T} + \sum 15\Delta_a.
\]
\end{restatable}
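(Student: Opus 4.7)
The plan is to mirror the proof of Theorem~\ref{thm:trMeanRegretG} (the trimmed-mean analogue), substituting the $\alpha$-shorth concentration inequality (Theorem~\ref{thm:sMeanConcenG}) in place of the $\alpha$-trimmed one. The key structural point is that the confidence bonus coded into crUCB (\Cref{algo:rUCBG}) only absorbs the $\tfrac{\sigma_0}{1-2\alpha}\sqrt{4\log t / N_a(t)}$ term in the concentration bound; the additional contamination ``bias'' $B := \frac{(6\alpha-8\alpha^2)\sigma_0}{(1-2\alpha)(1-\alpha)}\sqrt{6\log T}$ must be forced to be small by the hypothesis on $\alpha$, rather than cancelled by more samples.

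First I would introduce, for each $a\in[K]$ and $t\in[T]$, the event
\[
E_a(t) = \bigl\{\,|\mathrm{sMean}_\alpha(\mathbf{x}_a(t)) - \mu_a| \leq U_a(t) + B\,\bigr\},\qquad U_a(t) := \tfrac{\sigma_0}{1-2\alpha}\sqrt{4\log t / N_a(t)}.
\]
By Theorem~\ref{thm:sMeanConcenG} and the monotonicity of the bound in $\sigma$ together with $\sigma_a\leq\sigma_0$, each $E_a(t)$ fails with probability at most $4/t^2$; a union bound over $a$ and $t$ gives only an $O(K)$ additive contribution to the expected uncontaminated regret.

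On the intersection of these good events, the standard UCB argument — if a suboptimal arm $a$ is chosen at time $t$, then its optimistic index exceeds that of $a^\star$ — yields $2U_a(t) + 2B \geq \Delta_a$. Here I would invoke the hypothesis $\alpha \leq \frac{\Delta_{\min}}{4(\Delta_{\min}+9\sigma_0\sqrt{6\log T})}$: using $\alpha<1/3$ so that $(1-2\alpha)(1-\alpha)>2/9$ and $6\alpha-8\alpha^2\leq 6\alpha$, one checks that $2B \leq \Delta_{\min}/2 \leq \Delta_a/2$. This reduces the inequality to $U_a(t)\geq \Delta_a/4$, which rearranges to $N_a(T)\leq 64\sigma_0^2\log T / [(1-2\alpha)^2\Delta_a^2]$, and together with the $O(1)$ failure-event contribution gives an instance-dependent bound of the form $N_a(T)\leq 64\sigma_0^2\log T/\Delta_a^2 + \text{const}$. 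The distribution-free $8\sigma_0\sqrt{KT\log T}+\sum 15\Delta_a$ bound then follows by the standard peeling/Cauchy–Schwarz step: split arms at the threshold $\Delta=\sqrt{16\sigma_0^2 K\log T/T}$, bound $\Delta_a N_a(T)$ by $T\Delta/K$ below the threshold and by $64\sigma_0^2\log T/\Delta_a$ above, and sum.

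The main obstacle, relative to the trimmed-mean version, is the numerical bookkeeping needed to confirm that the slightly larger shorth bias coefficient $(6\alpha-8\alpha^2)/[(1-2\alpha)(1-\alpha)]$ — in place of $4\alpha/(1-2\alpha)$ for the trimmed mean — is absorbed by the constant $9$ appearing in the hypothesis (versus $4$ in Theorem~\ref{thm:trMeanRegretG}). This is why the admissible contamination regime shrinks from $\alpha<1/2$ to $\alpha<1/3$, and why the algebra of the $2B\leq\Delta_{\min}/2$ step requires some care. Once this absorption is verified, everything else in the proof — the union bound, the UCB suboptimal-pull argument, and the peeling — proceeds exactly as in the trimmed-mean case, with only the constants in the bias term altered.
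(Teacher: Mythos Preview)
Your plan is essentially the paper's own argument: invoke Theorem~\ref{thm:sMeanConcenG}, run the standard UCB suboptimal-pull inequality, bound $\mathbb{E}[N_a(T)]$, and finish with the gap-splitting step. The one point where your bookkeeping diverges from the paper's, and where your sketch as written does not quite deliver the stated constants, is the treatment of the $(1-2\alpha)^{-1}$ factor. You separate $2B\le \Delta_{\min}/2$ from $2U_a(t)\ge \Delta_a/2$ and obtain $N_a(t)\le 64\sigma_0^2\log T/[(1-2\alpha)^2\Delta_a^2]$, then silently drop the $(1-2\alpha)^{-2}$; moreover the crude estimate $(1-2\alpha)(1-\alpha)>2/9$ is too loose to certify $2B\le \Delta_{\min}/2$ under the stated hypothesis (it only yields $2B<\tfrac{3}{2}\Delta_{\min}$). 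The paper instead \emph{posits} the threshold $N_a(t)\ge 64\sigma_0^2\log T/\Delta_a^2$ without any $(1-2\alpha)$ factor, so that $2U_a(t)\le \Delta_a/[2(1-2\alpha)]$, and uses only $(1-\alpha)^{-1}<3/2$ (valid for $\alpha<1/3$) to bound $2B\le \tfrac{18\alpha\sigma_0}{1-2\alpha}\sqrt{6\log t}$; the constraint $\alpha\le \Delta_a/[4(\Delta_a+9\sigma_0\sqrt{6\log T})]$ is then exactly what makes $\tfrac{\Delta_a}{2(1-2\alpha)}+\tfrac{18\alpha\sigma_0}{1-2\alpha}\sqrt{6\log t}\le \Delta_a$ hold. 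Keeping the $(1-2\alpha)^{-1}$ on both terms and letting the $\alpha$-constraint absorb it jointly is precisely what gives the clean $64\sigma_0^2\log T/\Delta_a^2$ pull bound (hence the $8\sigma_0$ leading constant) and explains the $9$ in the hypothesis; your decoupled version would need tighter arithmetic to match those numbers.
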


\begin{restatable}[$\alpha$-shorth mean crUCB uncontaminated regret bounded rewards]{cor}{shMeanb}
    \label{cor:sh}
    If the rewards are bounded by $b$, and have contamination rate $\varepsilon \leq \alpha \leq \frac{\Delta_{\min}}{4(\Delta_{\min}+9b)}$, then
    \[
  \bar{R}_T \leq 8\sigma_0\sqrt{KT\log(T)} + \sum 15\Delta_a.
 \]
\end{restatable}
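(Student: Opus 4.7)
The plan is to retrace the proof of Theorem \ref{thm:sMeanRegretG} line by line, replacing the stochastic bound $\max_{x \in G_n} |x| \leq \sigma \sqrt{6 \log t}$ that controls the ``maximum'' term in the shorth mean concentration inequality (Theorem \ref{thm:sMeanConcenG}) by the deterministic bound $\max_{x \in G_n} |x| \leq b$ available under the bounded-reward hypothesis. The first step is to rederive a bounded-reward version of Theorem \ref{thm:sMeanConcenG}: revisiting the proof sketch, the interval-length bound $|I| \leq 2 \max_{x \in G_n} |x|$ and the bound on $|x|$ for contaminated $x \in J$ both propagate through with $b$ substituted for $\sigma \sqrt{6 \log t}$. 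Since the $\max$-term was the only source of the $\delta_2$ failure probability, we retain a concentration inequality of the form
\[
|\text{sMean}_{\alpha}(S_n) - \mu| \leq \frac{\sigma}{1 - 2\alpha} \sqrt{\frac{4}{n} \log t} + \frac{(6\alpha - 8\alpha^2)\, b}{(1 - 2\alpha)(1 - \alpha)}
\]
with probability at least $1 - 2/t^2$ (or $1-4/t^2$ if one prefers to keep the bookkeeping uniform with the unbounded case).

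The second step is to feed this concentration inequality into the UCB regret argument exactly as in the proof of Theorem \ref{thm:sMeanRegretG}. The confidence radius used by Algorithm \ref{algo:rUCBG} matches the first term of the inequality, so the second term acts as a fixed bias offset; the standard UCB book-keeping then requires that this offset be small enough compared to $\Delta_{\min}$ to guarantee that the optimal arm's upper confidence bound dominates a suboptimal arm's after $O(\log T/\Delta_a^2)$ pulls. Bounding $\frac{6\alpha - 8\alpha^2}{(1-2\alpha)(1-\alpha)} \leq 9\alpha$ for $\alpha < 1/3$ turns the offset into at most $9\alpha b$, and the same separation requirement that produced the threshold $\alpha \leq \Delta_{\min}/(4(\Delta_{\min} + 9\sigma_0\sqrt{6\log T}))$ in Theorem \ref{thm:sMeanRegretG} now produces $\alpha \leq \Delta_{\min}/(4(\Delta_{\min}+9b))$. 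Every remaining step — summing pulls over arms, integrating the tail probabilities $4/t^2$ into an $O(1)$ contribution, and producing the $\sum 15\Delta_a$ lower-order term — is identical.

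The main obstacle is not conceptual but bookkeeping: verifying that the numerical constants $9$ in the denominator of the $\alpha$-condition and $15$ in the lower-order term emerge unchanged, i.e., that the substitution $\sigma_0\sqrt{6\log T} \mapsto b$ affects only the quantity $9\sigma_0\sqrt{6\log T} \mapsto 9b$ and nothing else in the chain of inequalities. One subtlety worth double-checking is that the inequality $(6\alpha - 8\alpha^2)/((1-2\alpha)(1-\alpha)) \leq 9\alpha$ is actually tight enough in the regime $\alpha \leq 1/3$ for the condition on $\alpha$ to match the stated $\Delta_{\min}/(4(\Delta_{\min}+9b))$; if not, one would need to either sharpen the polynomial bound or absorb a small constant into $b$. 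Since the first (sub-Gaussian) term of the concentration inequality is identical to the unbounded case, the leading $8\sigma_0\sqrt{KT\log T}$ term of the regret is inherited verbatim from Theorem \ref{thm:sMeanRegretG}.
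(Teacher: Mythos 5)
Your proposal is correct and follows essentially the same route as the paper: substitute the deterministic bound $b$ for the sub-Gaussian maximum term in the shorth-mean concentration inequality, then rerun the UCB argument of Theorem \ref{thm:sMeanRegretG} with the new bias offset, yielding the constraint $\alpha \leq \Delta_{\min}/(4(\Delta_{\min}+9b))$ and the same regret bound. The subtlety you flag resolves itself: the paper bounds the doubled offset via $2(6\alpha-8\alpha^2)/(1-\alpha) \leq 18\alpha$, i.e.\ keeps the $1/(1-2\alpha)$ factor (valid for all $\alpha < 1/2$) rather than using $(6\alpha-8\alpha^2)/((1-2\alpha)(1-\alpha)) \leq 9\alpha$, and in any case the admissible $\alpha$ under the stated constraint is small enough that your cruder inequality also holds.
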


Proofs for \cref{thm:trMeanRegretG} and \ref{thm:sMeanRegretG} and their corollaries follow standard analysis and are provided in the appendix.

From \cref{thm:trMeanRegretG} and \ref{thm:sMeanRegretG} we get that crUCB has the same order of regret in the CSB setting as UCB1 has in the standard sMAB setting. The constraint on the magnitude of $\varepsilon$ is quite strong, but we show in \cref{sec:Simulations} that they can be broken and still obtain good empirical performance.
\paragraph{Remark} Our bounds above do not allow $\varepsilon$ to be too big relative to the minimum suboptimality gap $\Delta_{\min}$. This is natural: if $\varepsilon > \Delta_{\min}$ then no algorithm can get sublinear regret since distinguishing between the top two actions is statistically impossible even with infinite samples. We give a simple example in Appendix B.
Furthermore, it is possible to derive a regret bound\footnote{The $\tilde{O}(\cdot)$ notation hides constants and logarithmic terms. See Appendix B for details.}
of $\tilde{O}(\sigma_0 \sqrt{KT} + \tfrac{\alpha\sigma_0}{1-4 \alpha} T)$ for any choice of $\alpha$ such that $\varepsilon \le \alpha < 1/4$. The linear term in regret (which is unavoidable for large $\varepsilon$) may be acceptable if the corruption proportion is not very large.


\section{SIMULATIONS}
\label{sec:Simulations}
We compare our crUCB algorithms using the trimmed mean (tUCB) and shorth mean (sUCB) against a standard stochastic algorithm (UCB1, \cite{auerFinitetimeAnalysisMultiarmed2002}), a standard adversarial algorithm (EXP3, \cite{auerNonstochasticMultiarmedBandit2002}), two ``best of both worlds'' algorithms (EXP3++, \cite{seldinImprovedParametrizationAnalysis2017}, 0.5-TsallisInf, \cite{zimmertAnOptimal2019}), and another contamination robust algorithm (RUCB-MAB, \cite{kapoorCorruptiontolerantBanditLearning2018}). Each trial has five actions ($K=5$), is run for 1000 iterations ($T=1000$), for $\varepsilon \in \{0.05, 0.1\}$. For sUCB and tUCB, we set $\alpha = \varepsilon$ and $\sigma_0 = \sigma$. The plots are average results over 10 trials with error bars showing the standard deviation.

Our choice of $T$ comes from our motivation to apply contaminated bandits in education, where the sample sizes are often much smaller than for example in advertising. While $T=1000$ would be considered a large university class, it still allows one to visually see regret for smaller iterations and see how performance stabilizes.  We similarly chose number $K$ of arms and proportion contamination $\varepsilon$ to be in a realistic range for the application we have in mind. All algorithms use recommended parameter settings given within their respective papers. 


\paragraph{Rewards and gaps} We chose the reward distribution to be binomial(n=10) to simulate likert scale and because this distribution has bounded rewards and is not symmetric for large $p$. For the optimal action, $p = .9$ and for suboptimal actions $p=.8$, thus the suboptimality gap is $\Delta =1$. All non-optimal actions have the same true distribution.

\paragraph{Adversaries} We focus on a Bernoulli adversary which gives a contaminated reward at every time step with probability $\varepsilon$. We also implement a cluster adversary which contaminates at the beginning of play to show the weakness of algorithms to this type of attack.

\paragraph{Contamination} We use a random malicious contamination scheme which chooses a contaminated reward uniformly from ranges that increase suboptimal action means and decrease the optimal action's mean.

\paragraph{Performance measurement} We plot the average regret over 10 trials for 1000 iterations.

We recommend to view the plots on a color screen.

\begin{figure}[h!]
\centering
  \begin{subfigure}{.4\textwidth}
    \centering
    \includegraphics[width=\textwidth]{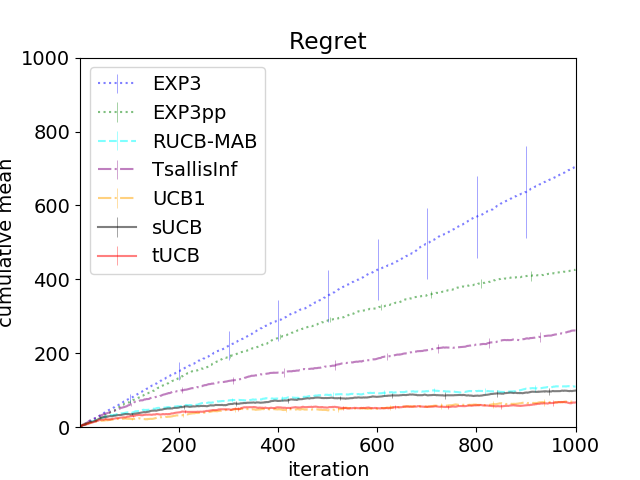}
    \caption{$\varepsilon=0$}
    \label{subfig:0}
  \end{subfigure}%

  \begin{subfigure}{.4\textwidth}
    \centering
    \includegraphics[width=\textwidth]{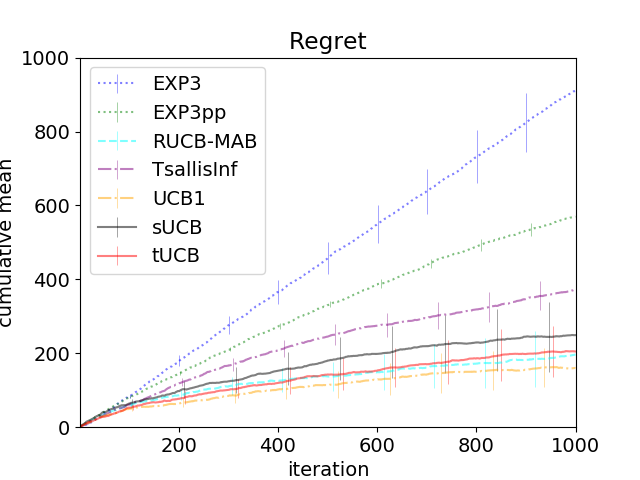}
    \caption{$\varepsilon=0.05$}
    \label{subfig:.05}
  \end{subfigure}

  \begin{subfigure}{.4\textwidth}
    \centering
    \includegraphics[width=\textwidth]{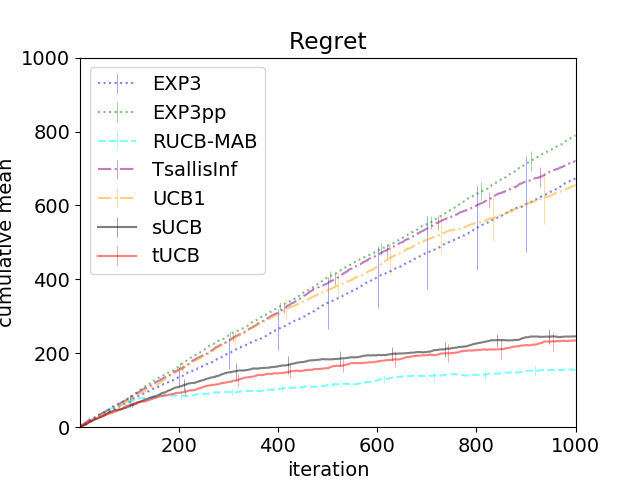}
    \caption{$\varepsilon=0.1$}
    \label{subfig:.1}
  \end{subfigure}%
  \caption{Binomial Rewards With Varying Proportion Of Contamination}
  \label{fig:binomial_true}
\end{figure}

\begin{figure}[h!]
\centering
  \begin{subfigure}{.4\textwidth}
    \centering
    \includegraphics[width=\textwidth]{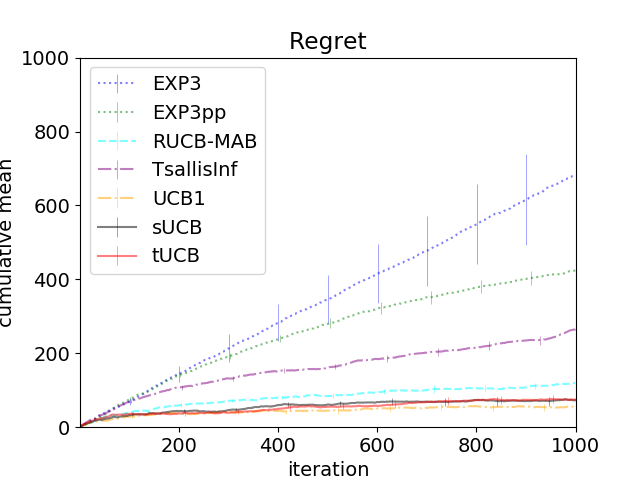}
    \caption{$\alpha=0$}
    \label{subfig:binomial_miss_0}
  \end{subfigure}%

  \begin{subfigure}{.4\textwidth}
    \centering
    \includegraphics[width=\textwidth]{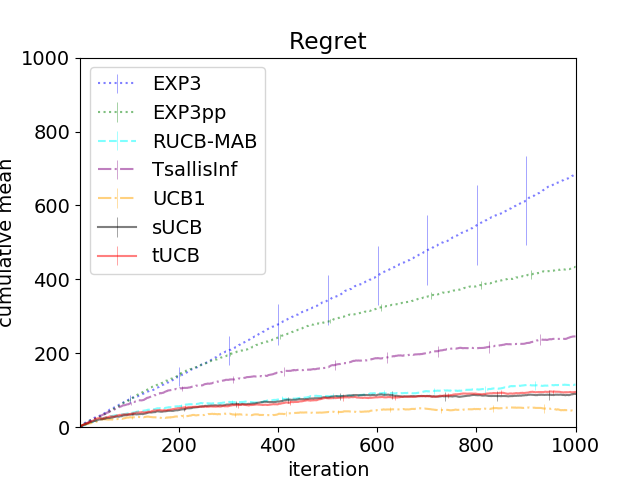}
    \caption{$\alpha=0.05$}
    \label{subfig:binomial_miss_0.05}
  \end{subfigure}%

  \begin{subfigure}{.4\textwidth}
    \centering
    \includegraphics[width=\textwidth]{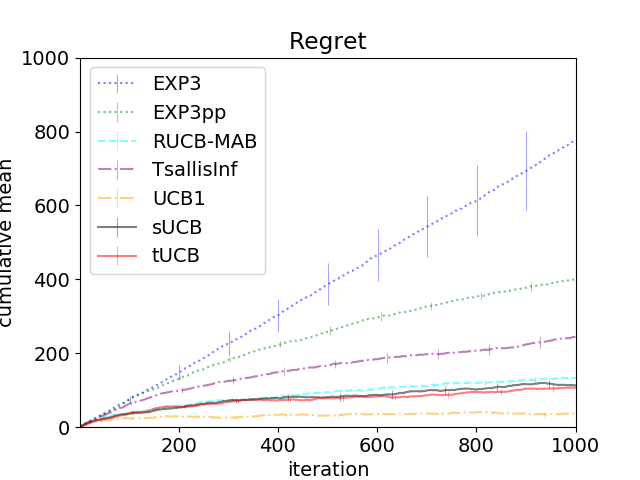}
    \caption{$\alpha=0.1$}
    \label{subfig:binomial_miss_0.1}
  \end{subfigure}%
  \caption{Misspecified $\alpha$ For $\varepsilon=0$.}
  \label{fig:misspecified}
\end{figure}

\begin{figure}[t]
\centering
  \begin{subfigure}{.4\textwidth}
    \centering
    \includegraphics[width=\textwidth]{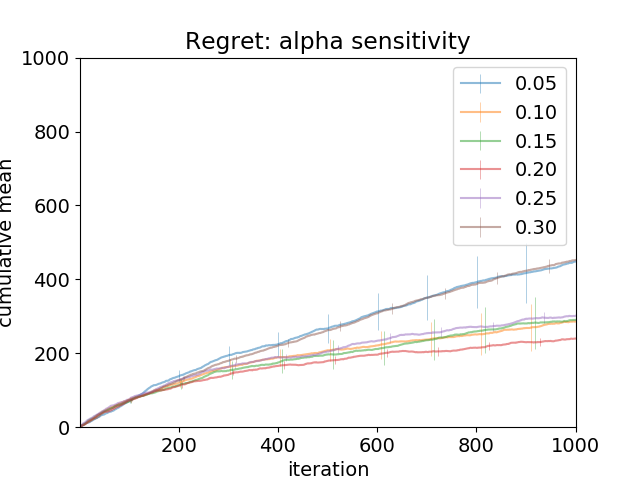}
    \caption{sUCB}
    \label{subfig:binomial_sens_sUCB}
  \end{subfigure}%

  \begin{subfigure}{.4\textwidth}
    \centering
    \includegraphics[width=\textwidth]{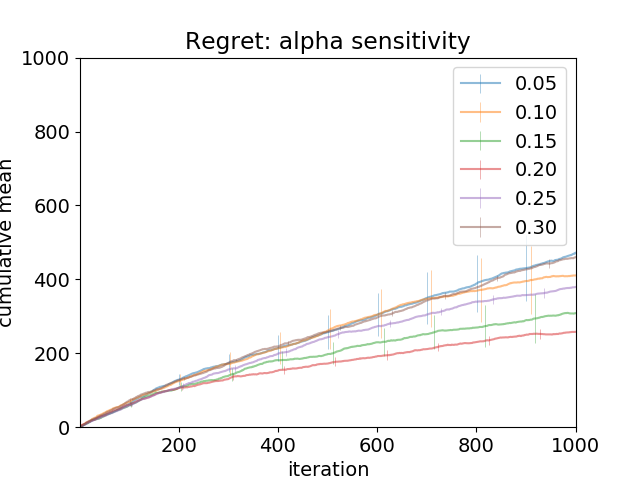}
    \caption{tUCB}
    \label{subfig:binomial_sens_tUCB}
  \end{subfigure}%
\caption{Regret Sensitivity For Various
$\alpha$.}
  \label{fig:sensitivity alpha}
\end{figure}

\begin{figure}[t]
\centering
  \begin{subfigure}{.4\textwidth}
    \centering
    \includegraphics[width=\textwidth]{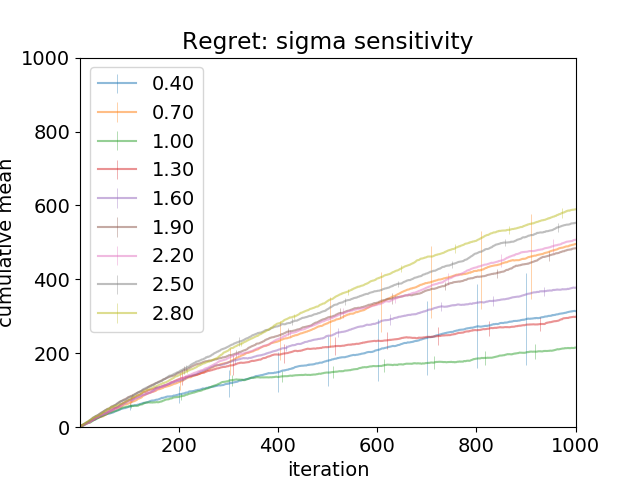}
    \caption{sUCB}
    \label{subfig:binomial_sig_sUCB}
  \end{subfigure}%

  \begin{subfigure}{.4\textwidth}
    \centering
    \includegraphics[width=\textwidth]{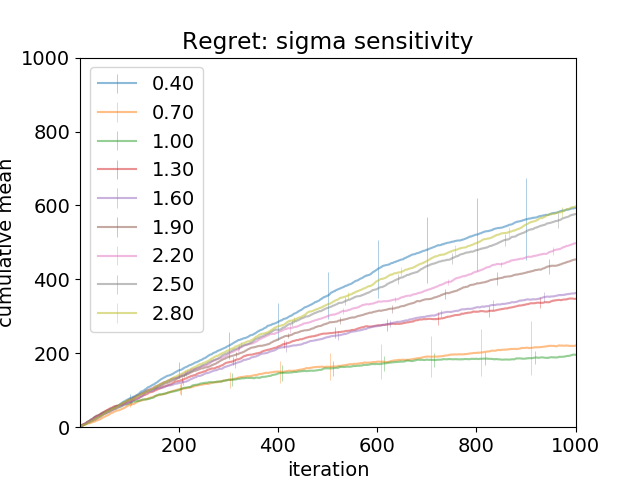}
    \caption{tUCB}
    \label{binomial_sig_tUCB}
  \end{subfigure}
  \caption{Regret Sensitivity For Various $\sigma$.}
  \label{fig:sensitivity sig}
\end{figure}

In \Cref{subfig:0} we see that the adversarial and best of both worlds algorithms, EXP3, EXP3++, and TsallisInf, perform poorly in the purely stochastic setting compared to the UCB type algorithms. In \Cref{fig:binomial_true}, we see the best of these, TsallisInf, starts to degrade as the proportion of contamination increases while the robust UCB algorithms are only slightly affected. These simulations show a clear performance benefit to using algorithms that specifically account for contaminated rewards.

\Cref{fig:sensitivity alpha} and \Cref{fig:sensitivity sig} shows that for both sUCB and tUCB, the choice of $\alpha$ is much less sensitive than choice of $\sigma$. Over estimating or slightly underestimating $\alpha$ does not degrade performance significantly. Underestimating $\sigma$ can give a significant boost to performance while over estimating can degrade it. This is consistent with the performance of UCB algorithms in practice, which often scale the exploration term to improve empirical performance \citep{liu2014trading}.

To look at the impact of using a contamination robust algorithm when there is no contamination, we plotted various $\alpha$ values when $\varepsilon=0$, shown in \Cref{fig:misspecified}. Assuming small amounts of contamination when there is none only has a small impact on performance, suggesting it is permissible to use contamination robust methods when there is uncertainty of contamination. Similarly, small $K$ and large $\Delta$ can render bounded contamination impotent and would not require algorithms that account for it.

We have included RUCB-MAB in our simulations because it is simple to implement and can perform similarly well to our algorithms. We note it currently has guarantees only for Gaussian rewards \citep{kapoorCorruptiontolerantBanditLearning2018}.

\begin{figure}[h]
\centering

  \begin{subfigure}{.4\textwidth}
    \centering
    \includegraphics[width=\textwidth]{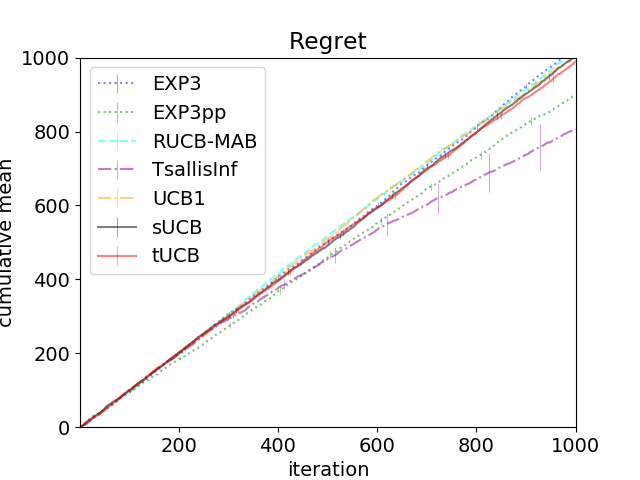}
    \caption{$\varepsilon=.1$}
    \label{subfig:cluster01}
  \end{subfigure}%
\caption{Front Cluster Attack}
  \label{fig:cluster}
\end{figure}

\Cref{fig:cluster} shows the poor performance of all algorithms when the first $\varepsilon$ rewards are contaminated. TsallisInf and EXP3++ show some recovery, but it is clear this type of adversary is harmful. This remains an open problem for scenarios with small $T$.

We also considered including the BARBAR algorithm \citep{gupta2019better} whose epoch scheme is the only algorithm we know that accounts for the front cluster attack. We chose against this as for our setting of $T=1000$ the BARBAR algorithm only has one epoch, and thus does not make any updates to the estimated gaps, resulting in pure random exploration.

\section{DISCUSSION}

We have presented two variants of an $\varepsilon$-contamination robust UCB algorithm to handle uninformative or malicious rewards in the stochastic bandit setting. As the main contribution, we proved concentration inequalities for the $\alpha$-trimmed and $\alpha$-shorth mean in the $\varepsilon$-contamination setting with sub-Gaussian samples and guarantees on the uncontaminated regret of the crUCB algorithms. The regret guarantees are similar to those in the uncontaminated sMAB setting.

We have shown through simulation that these algorithms can outperform ``best of both worlds'' algorithms and those for stochastic or adversarial environments when using a small number of iterations and $\varepsilon$ chosen to be reasonable when implementing bandits in education.

We highlight that our algorithms are simple to implement. In practice, it is often easy to find upper bounds on the parameters which are robust to underestimation. Our algorithms are numerically stable and have clear intuition to their actions.

A weak point of these algorithms is they require knowledge of $\alpha$ before hand. Choices of $\alpha$ may come from domain knowledge, but could also require a separate study.

In this work we assumed a fully adaptive adversarial contamination, constrained only by the total fraction of contamination at any time step.  By making more assumptions about the adversary, it is likely possible to improve uncontaminated regret bounds.

\paragraph{Limitations}
The adversary used in the simulation is quite simple and does not take full advantage of the power we allow in our model. We designed it as a first test of our algorithms and associated theory. In the future, we would like to design simulated adversaries that are modeled on real world contamination. It will also be important to deploy contamination robust algorithms in the real world. This will require thought on how to select various tuning parameters ahead of the deployment.

There remain many open questions in this area. In particular, we think this work could be improved along the following directions.

\paragraph{Randomized algorithms} UCB-type algorithms are often outperformed in applications by the randomized Thompson sampling algorithm. Creating a randomized algorithm that accounts for the contamination model would increase the practicality of this line of work.

\paragraph{Contamination correlated with true rewards} One possibility is that the contaminated rewards contain information of the true rewards. For example if contamination can be missing data, we know dropout can be correlated with the treatment condition.

\subsubsection*{Acknowledgements}
L.N. acknowledges the support of NSF via grant DMS-1646108 and thanks Joseph Jay Williams for helpful discussions and for inspiring this work.  A.T. would like to acknowledge the support of a Sloan Research Fellowship and NSF grant CAREER IIS-1452099.



\subsubsection*{References}
\printbibliography[heading=none]


\iftrue
\newpage
\onecolumn
\begin{appendices}

\section{Proofs}

\subsection{Theorem \ref{thm:trMeanConcenG}}
\trMeanConcenG*
\begin{proof}[Proof of \cref{thm:trMeanConcenG}]
 \label{pf:trMeanConcenG}
 Without loss of generality assume $\mu = 0$ for the underlying true distribution. For $X \sim \sigma$-sub-Gaussian, by definition, we have:
 \begin{align*}
    &P\bigg( |X| \geq \mu + \eta \bigg)
     \leq 2\exp (-\frac{\eta^2}{2\sigma^2})\\
    &P \bigg(|\bar{x}_n - \mu| \geq \sigma \sqrt{\frac{2}{n}\log\frac{2}{\delta_1}} \bigg)
     \leq \delta_1
 \end{align*}
and
\begin{align*}
 &P\bigg(\max_{i \in [n]}|X_i| \geq t\bigg)
    \leq 2n\exp\bigg( -\frac{t^2}{2\sigma^2} \bigg)\\
 &P\bigg(\max_{i \in [n]}|X_i| \geq \sigma\sqrt{2\log\frac{2n}{\delta_2}}\bigg)
    \leq \delta_2.
\end{align*}
Let $\tilde{G} \subset G_n$ represent the points which are not contaminated and $C \subset G_n$ represent the contaminated points. Then our sample can be represented by the union $S_n = \tilde{G} \cup C$. Let $R$ represent the points that remain after trimming $\alpha$ fraction of the largest and smallest points, and $T$ be the set of points that were trimmed. Then we have that.
\begin{align*}
 |\text{trMean}_{\alpha}(S_n)| &= \bigg| \frac{1}{(1-2\alpha)n} \sum_{x \in R} x\bigg|\\
    &= \frac{1}{(1-2\alpha)n}
     \bigg|
         \sum_{x \in \tilde{G} \cap R} x
        + \sum_{x \in C \cap R} x
     \bigg|\\
    &\leq \frac{1}{(1-2\alpha)n}
     \bigg|
         \underbrace{\sum_{x \in \tilde{G}} x}_{A_1}
        - \underbrace{\sum_{x \in \tilde{G} \cap T} x}_{A_2}
        + \underbrace{\sum_{x \in C \cap R} x}_{A_3}
     \bigg|\\
    &\leq \frac{1}{(1-2\alpha)n} \bigg(
         \bigg| \underbrace{\sum_{x \in \tilde{G}} x}_{A_1} \bigg|
        + \bigg| \underbrace{\sum_{x \in \tilde{G} \cap T} x}_{A_2} \bigg|
        + \bigg| \underbrace{\sum_{x \in C \cap R} x}_{A_3} \bigg|
     \bigg)
\end{align*}
with
\begin{align*}
 A_1 &= \bigg| \sum_{x \in G_n} x  - \sum_{x \in G_n\setminus \tilde{G}} x\bigg|
 \leq \bigg| \sum_{x \in G_n} x \bigg| + \bigg| \sum_{x \in G_n\setminus \tilde{G}} x\bigg|
     \leq n|\bar{x}_{G_n}| + \varepsilon n \max_{x \in G_n} |x|
     &&\text{w.p. at least } 1-\delta_1 - \delta_2,\\
 A_2 &\leq 2 \alpha n \max_{x \in G_n} |x|   &&\text{w.p. at least } 1-\delta_2,\\
 A_3 &\leq \varepsilon n\max_{x \in G_n} |x|  &&\text{w.p. at least } 1-\delta_2.
\end{align*}
Combining we get,
\begin{align*}
 |\text{trMean}_{\alpha}(S_n) - \mu|
 &\leq \frac{1}{(1-2\alpha)} \bigg(|\bar{x}_{G_n}|
        + \max_{x \in G_n} |x|(2\varepsilon + 2\alpha) \bigg)\\
 &\leq \frac{1}{(1-2\alpha)} \bigg(|\bar{x}_{G_n}|
        + \max_{x \in G_n} |x|(4\alpha) \bigg)\\
 &\leq \frac{\sigma}{(1-2\alpha)} \bigg(
    \sqrt{\frac{2}{n} \log\frac{2}{\delta_1}}
    + 4\alpha\sqrt{2\log\frac{2t}{\delta_2}} \bigg)
\end{align*}

with probability at least $1-\delta_1 - \delta_2$. Letting $\delta_1 = \frac{2}{t^2}$ and $\delta_2 = \frac{2}{t^2}$, and assuming $\alpha \geq \varepsilon$, we have,
\begin{align*}
 |\text{trMean}_{\alpha}(S_n) - \mu| \leq \frac{\sigma}{(1-2\alpha)} \bigg(
    \sqrt{\frac{4}{n} \log(t)}
    + 4\alpha\sqrt{6\log(t)} \bigg)
\end{align*}
with probability at least $1 - \frac{4}{t^2}$.
\end{proof}


\subsection{Theorem \ref{thm:sMeanConcenG}}
\sMeanConcenG*

\begin{proof}[Proof of \cref{thm:sMeanConcenG}]
 \label{pf:sMeanConcenG}
 Without loss of generality assume $\mu = 0$ for the underlying true distribution. Let $X \sim \sigma$-sub-Gaussian.

 We want to bound the impact of the contaminated points in our interval. Once we have this bound, the proof follows just as in the trimmed mean.

 Assume $\alpha < 1/3$ and $\varepsilon \leq \alpha$. Let $J$ be the interval that contains the shortest $1-\alpha$ fraction of $S_n$, $I$ be the interval that contains $\tilde{G}$ (i.e. the remaining good points after contamination), and $T$ be the interval that contains the points of $S_n$ after trimming the $\alpha$ largest and smallest fraction of points. Use $|I|$ to denote the length of interval $I$. It must be that $I \cap J \neq \emptyset$ because otherwise the points in $I\cup J$ would contain $2 - 2\alpha > 1$ fraction of $S_n$. Let $c$ be a point in $I \cap J$ and $x$ be a point in $J$. Recall that trMean$_{\alpha}(S_n)$ is the trimmed mean of the contaminated sample $S_n$ from above. Then we have,
 \begin{align*}
    |x| &\leq |x-c| + |c- \text{trMean}_{\alpha}(S_n)| + |\text{trMean}_{\alpha}(S_n)|\\
        &\leq |J| + |I| + |\text{trMean}_{\alpha}(S_n)|\\
        &\leq 2|I| + |\text{trMean}_{\alpha}(S_n)|
 \end{align*}
 The second step comes from $x$ and $c$ both being in $J$ and because $I \supseteq T$. The third step comes from $|J| \leq |I|$.

To bound the length of $I$ we have,
 \begin{align*}
    |I| &\leq 2 \max_{x \in G_n} |x| \ \ \text{w.p. at least } 1-\delta_2.
 \end{align*}
Finally, since
\[
 |\text{trMean}_{\alpha}(S_n)| \leq \frac{1}{(1-2\alpha)}(|\bar{x}_{G_n}|
     + 4\alpha \max_{x \in G_n} |x| )
\]
with probability at least $1-\delta_1 - \delta_2$, we get that for $x \in J$,
\begin{align*}
 |x| &\leq 4 \max_{x \in G_n} |x|
            +\frac{1}{(1-2\alpha)}(|\bar{x}_{G_n}|
            + 4\alpha \max_{x \in G_n} |x| )
     &&  \text{w.p. at least } 1-\delta_1 - \delta_2,\\
     &= \frac{|\bar{x}_{G_n}|}{1-2\alpha}
            + \Big(4 + \frac{4\alpha}{1-2\alpha}\Big)\max_{x \in G_n} |x|.
\end{align*}
Now that we have a bound on the contaminated points in $J$, our analysis follows as before,
\begin{align*}
 |\text{sMean}_{\alpha}&(S_n)| \\
        &\leq \frac{1}{(1-\alpha)n} \bigg(
         \bigg| \underbrace{\sum_{x \in \tilde{G}} x}_{A_1} \bigg|
        + \bigg| \underbrace{\sum_{x \in \tilde{G} \cap \neg J} x}_{A_2} \bigg|
        + \bigg| \underbrace{\sum_{x \in C \cap J} x}_{A_3} \bigg|
     \bigg)
\end{align*}
where
\begin{align*}
 A_1 &\leq n|\bar{x}_{G_n}| + \varepsilon n \max_{x \in G_n} |x|
     && \text{w.p. at least } 1-\delta_1 - \delta_2,\\
 A_2 &\leq \alpha n \max_{x \in G_n} |x|
     && \text{w.p. at least } 1-\delta_2,\\
 A_3 &\leq \varepsilon n \Bigg( \frac{|\bar{x}_{G_n}|}{1-2\alpha}
        + \Big(4 + \frac{4\alpha}{1-2\alpha}\Big)\max_{x \in G_n} |x| \Bigg)
     && \text{w.p. at least } 1-\delta_1 - \delta_2.
\end{align*}
Combining we get,
\begin{align*}
 &|\text{sMean}_{\alpha}(S_n) - \mu| \\
    &\leq \frac{1}{(1-\alpha)} \bigg(
     |\bar{x}_{G_n}|\big(1 + \frac{\varepsilon}{1-2\alpha}\big)
     + \max_{x \in G_n} |x| \big(5\varepsilon + \alpha
     + \frac{4\alpha\varepsilon}{1-2\alpha} \big)  \bigg)\\
    &\leq \frac{1}{(1-\alpha)} \bigg(
     |\bar{x}_{G_n}|\big(\frac{1-\alpha}{1-2\alpha}\big)
     + \max_{x \in G_n} |x| \big(6\alpha
     + \frac{4\alpha^2}{1-2\alpha} \big)  \bigg)\\
    &= \frac{1}{1-\alpha} \bigg(
     |\bar{x}_{G_n}| \big(\frac{1-\alpha}{1-2\alpha}\big)
     + \max_{x \in G_n} |x| \frac{6\alpha-8\alpha^2}{1-2\alpha}\bigg)\\
    &\leq \frac{\sigma}{1-2\alpha} \sqrt{\frac{2}{n} \log\frac{2}{\delta_1}}
     + \frac{(6\alpha-8\alpha^2) \sigma}{(1-2\alpha)(1-\alpha)} \sqrt{2\log\frac{2t}{\delta_2}}
\end{align*}
With probability at least $1-\delta_1 - \delta_2$. Letting $\delta_1 = \frac{2}{t^2}$ and $\delta_2 = \frac{2}{t^2}$, and assuming $\alpha \geq \varepsilon$, we have,
\begin{align*}
 |\text{sMean}_{\alpha}&(S_n) - \mu|\\
    &\leq \frac{\sigma}{1-2\alpha} \sqrt{\frac{4}{n} \log t}
     + \frac{(6\alpha-8\alpha^2) \sigma}{(1-2\alpha)(1-\alpha)} \sqrt{6\log t}
\end{align*}
With probability at least $1 - \frac{4}{t^2}$.
\end{proof}
\subsection{Theorem \ref{thm:trMeanRegretG}}
\trMeanRegretG*
\begin{proof}[Proof of \cref{thm:trMeanRegretG}]
 \label{pf:trMeanRegretG}
 First will show that $\EX[N_a(t)] < \infty$ for non-optimal actions. Assume $N_a(t) \geq \frac{64\sigma_0^2\log(T)}{\Delta_a^2}$.
 \begin{align*}
    &\hat{\mu}_{a}
     + \frac{\sigma_0}{(1-2\alpha)}\bigg(\sqrt{\frac{4}{N_a(t)}\log t}
     + 4\alpha\sqrt{6\log(t)}\bigg)\\
    &\leq \mu_{a}
     + \frac{\sigma_i+\sigma_0}{(1-2\alpha)}\bigg(\sqrt{\frac{4}{N_a(t)}\log t}
     + 4\alpha\sqrt{6\log(t)}\bigg)
     && \text{w.p. at least } 1-\frac{4}{t^2}\\
    &\leq \mu^*
     - \Delta_a
     + \frac{2\sigma_0}{(1-2\alpha)}\bigg(\sqrt{\frac{4}{N_a(t)}\log t}
     + 4\alpha\sqrt{6\log(t)}\bigg)\\
    &\leq \mu^*
     - \Delta_a
     + \frac{\Delta_a}{2(1-2\alpha)}
     + \frac{2\sigma_0 4\alpha}{(1-2\alpha)} \sqrt{6\log t}
     && N_a(t) \geq \frac{64\sigma_0^2\log(T)}{\Delta_a^2}\\
    &\leq \mu^*
     && \alpha \leq \frac{\Delta_a}{ 4(\Delta_a + 4\sigma_0\sqrt{6\log(t))}}\\
    &\leq \hat{\mu}^*
     + \frac{\sigma_{i^*}}{(1-2\alpha)}\bigg(\sqrt{\frac{4}{N^*(t)}\log t}
     + 4\alpha\sqrt{6\log(t)} \bigg)
     && \text{w.p. at least } 1-\frac{4}{t^2}\\
    &\leq \hat{\mu}^*
     + \frac{\sigma_0}{(1-2\alpha)}\bigg(\sqrt{\frac{4}{N^*(t)}\log t}
     + 4\alpha\sqrt{6\log(t)}\bigg).
 \end{align*}
 Now to find $\EX[N_a(T)]$ for non-optimal actions.
 \begin{align*}
    \EX[N_a(T)] &= 1 + \EX \bigg[ \sum_{t=K+1}^T \1\{A_t=a\} \bigg]\\
    &= 1 + \EX \bigg[ \sum_{t=K+1}^T \1 \bigg\{A_t=a, N_a(t) \leq \frac{64\sigma_0^2\log(T)}{\Delta_a^2} \bigg\}
     + \1 \bigg\{A_t=a, N_a(t) > \frac{64\sigma_0^2\log(T)}{\Delta_a^2}  \bigg\} \bigg]\\
    &\leq 1 + \frac{64\sigma_0^2\log(T)}{\Delta_a^2}
     + \sum_{t=K+1}^T \mathbb{P} \bigg[ A_t=a, N_a(t) > \frac{64\sigma_0^2\log(T)}{\Delta_a^2}  \bigg]\\
    &= 1 + \frac{64\sigma_0^2\log(T)}{\Delta_a^2}
     + \sum_{t=K+1}^T \mathbb{P} \bigg[ A_t=a | N_a(t) > \frac{64\sigma_0^2\log(T)}{\Delta_a^2}  \bigg]
     \mathbb{P} \bigg[ N_a(t) > \frac{64\sigma_0^2\log(T)}{\Delta_a^2}  \bigg]\\
    &\leq 1 + \frac{64\sigma_0^2\log(T)}{\Delta_a^2}  + \sum_{t=K+1}^T \frac{8}{t^2}\\
    &\leq \frac{64\sigma_0^2\log(T)}{\Delta_a^2}  + 15.
 \end{align*}

 Finally, we can find the regret following the standard analysis,

 \begin{align*}
    \bar{R} &= \sum_{a=2}^K \Delta_a\EX[N_a(T)] &&\\
    &= \sum_{\Delta_a < \Delta} \Delta_a\EX[ N_a(T)]
     + \sum_{\Delta_a \geq \Delta} \Delta_a\EX \bigg[ N_a(T) \bigg] \\
    &\leq \Delta T
     + \sum_{\Delta_a \geq \Delta} \big[ \frac{ 64\sigma_0^2\log(T)}{\Delta_a}
     + 15\Delta_a \big]
     && \EX[N_a(t)] \leq \frac{64\sigma_0^2\log(T)}{\Delta_a}  + 15\\
    &\leq 8\sigma_0\sqrt{KT\log(T)} + \sum 15\Delta_a
     && \Delta = \sqrt{\frac{64K\sigma_0^2\log(T)}{T}}.
 \end{align*}
\end{proof}

\subsection{Corollary \ref{cor:tr}}
\trMeanb*
\begin{proof}[Proof of \cref{cor:tr}]
 \label{pf:tr}
 By replacing the part of the concentration bound for the trimmed mean that is based on the maximum value in the sample with $b$, we get that,
\begin{align*}
 |\text{trMean}_{\alpha}(S_n) - \mu| \leq \frac{\sigma}{(1-2\alpha)}
    \sqrt{\frac{4}{n} \log(t)}
    + \frac{4\alpha}{1-2\alpha}b
\end{align*}
with probability at least $1 - \frac{4}{t^2}$.

First will show that $\EX[N_a(t)] < \infty$ for non-optimal actions. Assume $N_a(t) \geq \frac{64\sigma_0^2\log(T)}{\Delta_a^2}$.
 \begin{align*}
    &\hat{\mu}_{a}
     + \frac{\sigma_0}{(1-2\alpha)}\sqrt{\frac{4}{N_a(t)}\log t}
     + \frac{4\alpha}{1-2\alpha}b\\
    &\leq \mu_{a}
     + \frac{\sigma_i+\sigma_0}{(1-2\alpha)}\sqrt{\frac{4}{N_a(t)}\log t}
     + \frac{8\alpha}{1-2\alpha}b
     && \text{w.p. at least } 1-\frac{4}{t^2}\\
    &\leq \mu^*
     - \Delta_a
     + \frac{2\sigma_0}{(1-2\alpha)}\sqrt{\frac{4}{N_a(t)}\log t}
     + \frac{8\alpha}{1-2\alpha}b\\
    &\leq \mu^*
     - \Delta_a
     + \frac{\Delta_a}{2(1-2\alpha)}
     + \frac{8\alpha}{(1-2\alpha)} b
     && N_a(t) \geq \frac{64\sigma_0^2\log(T)}{\Delta_a^2}\\
    &\leq \mu^*
     && \alpha \leq \frac{\Delta_a}{4(\Delta_a+4b)}\\
    &\leq \hat{\mu}^*
     + \frac{\sigma_{i^*}}{(1-2\alpha)}\sqrt{\frac{4}{N^*(t)}\log t}
     + \frac{4\alpha}{1-2\alpha}b
     && \text{w.p. at least } 1-\frac{4}{t^2}\\
    &\leq \hat{\mu}^*
     + \frac{\sigma_0}{(1-2\alpha)}\sqrt{\frac{4}{N^*(t)}\log t}
     + \frac{4\alpha}{1-2\alpha}b.
 \end{align*}
 Results follow with a similar analysis as above.
\end{proof}


\subsection{Theorem \ref{thm:sMeanRegretG}}
\sMeanRegretG*

\begin{proof}[Proof of \cref{thm:sMeanRegretG}]
 \label{pf:sMeanRegretG}
 The proof for the contamination robust UCB using the $\alpha$-shorth mean is similar to that of the trimmed mean.

 \begin{align*}
    &\hat{\mu}_{a}
     + \frac{\sigma_0}{1-2\alpha} \sqrt{\frac{4}{N_a(t)} \log t}
     + \frac{(6\alpha-8\alpha^2) \sigma}{(1-2\alpha)(1-\alpha)} \sqrt{6\log t}\\
    &\leq \mu^*
     - \Delta_a
     + \frac{2\sigma_0}{1-2\alpha} \sqrt{\frac{4}{N_a(t)} \log t}
     + 2\frac{(6\alpha-8\alpha^2) \sigma_0}{(1-2\alpha)(1-\alpha)} \sqrt{\log t}
     && \text{w.p.a.l } 1-\frac{4}{t^2}\\
    &\leq \mu^*
     - \Delta_a
     + \frac{\Delta_a}{2(1-2\alpha)}
     + \frac{18\alpha\sigma_0}{(1-2\alpha)} \sqrt{6\log t}
     && N_a(t) \geq \frac{64\sigma_0^2\log(t)}{\Delta_a^2}, \ \alpha < 1/3\\
    &\leq \mu^*
     && \alpha \leq \frac{\Delta_a}{4(\Delta_a + 9\sigma_0\sqrt{6 \log t})}\\
    &\leq \hat{\mu}^*
    + \frac{\sigma_0}{1-2\alpha} \sqrt{\frac{4}{N^*(t)} \log t}
    + \frac{6\alpha-8\alpha^2 \sigma}{(1-2\alpha)(1-\alpha)} \sqrt{6\log t}
 \end{align*}

Using the analysis from the trimmed mean regret, we again get,
 \[
    \EX[N_a(t)] \leq \frac{64\sigma_0^2\log T}{\Delta_a} + \sum 15 \Delta_a
 \]

 Using this value and standard regret analysis yields

 \[
  \bar{R}_T \leq 8\sigma_0\sqrt{KT\log(T)} + \sum 15\Delta_a.
 \]
\end{proof}

\subsection{Corollary \ref{cor:sh}}
\shMeanb*
\begin{proof}[Proof of \cref{cor:sh}]
 \label{pf:sh}
 By replacing the part of the concentration bound for the trimmed mean that is based on the maximum value in the sample with $b$, we get that,

 \begin{align*}
 |\text{sMean}_{\alpha}(S_n) - \mu|
    \leq \frac{\sigma}{1-2\alpha} \sqrt{\frac{4}{n} \log t}
     + \frac{6\alpha-8\alpha^2 }{(1-2\alpha)(1-\alpha)} b
\end{align*}
With probability at least $1 - \frac{4}{t^2}$.

Follow similar analysis as in \cref{pf:tr} but setting constraint to be,
\begin{align*}
    \varepsilon &\leq \alpha \leq \frac{\Delta_{\min}}{4(\Delta_{\min}+9b)}
\end{align*}
\end{proof}

\section{Relationship of $\varepsilon$ and $\Delta_{\min}$}
\label{sec:deltamin}

One quick example showing that $\varepsilon > \Delta_{\min}$ can prohibit sublinear regret is to consider the CSB game with two actions and Bernoulli rewards. If $a_1 \sim B(p)$ and $a_2 \sim B(p-\varepsilon)$ then an adversary can choose all the contaminated rewards for $a_2$ to be 1 making it appear that $a_2 \sim B(p)$. Thus the actions are indistinguishable to the learner.

However, we can still provide a bound for larger values of $\varepsilon$ provided one is willing to tolerate a linear term in the regret. We outline the argument only for the trimmed mean case since the argument for the shorth mean is very similar. Note that argument for bounding $\mathbb{E}[N_a(T)]$ in \Cref{thm:trMeanRegretG} works under the condition
\[
\alpha \leq \frac{\Delta_a}{ 4(\Delta_a + 4\sigma_0\sqrt{6\log(T))}} .
\]
Let $\mathcal{S}$ be the set of actions satisfying this condition. The arguments in the proof of \Cref{thm:trMeanRegretG} show that
\[
\sum_{a>1, a \in \mathcal{S}} \Delta_a \mathbb{E}[N_a(T)]
\le 8\sigma_0\sqrt{KT\log(T)} + \sum_{a>1, a \in \mathcal{S}} 15\Delta_a .
\]
Therefore the bound of $\tilde{O}(\sigma_0\sqrt{KT})$ holds only for the regret due to actions $a \in \mathcal{S}$. For any action $a \notin \mathcal{S}$, we have
\[
\Delta_a < \frac{16 \alpha \sigma_0\sqrt{6 \log(T)}}{1-4\alpha}
\]
assuming $\alpha < 0.25$. The total regret contribution for $a \notin \mathcal{S}$ is therefore
\begin{align*}
   \sum_{a>1, a \notin \mathcal{S}} \Delta_a \mathbb{E}[N_a(T)]
    &\le \frac{16 \alpha \sigma_0\sqrt{6 \log(T)}}{1-4\alpha}
    \sum_{a>1, a \notin \mathcal{S}} \mathbb{E}[N_a(T)]\\
    &\le \frac{16 \alpha \sigma_0\sqrt{6 \log(T)}}{1-4\alpha} T
\end{align*}{}

So the total regret is $\tilde{O}(\sqrt{KT}+\tfrac{\alpha}{1-4\alpha} T)$.
\end{appendices}
\fi
\end{document}